\newtheorem{theorem}{Theorem}
\newtheorem{lemma}{Lemma}
\newtcolorbox{mybox}[2][]{colbacktitle=red!10!white, colback=blue!10!white,coltitle=red!70!black, title={#2},fonttitle=\bfseries,#1}
\title{HRP: High-Rank Preheating for Superior LoRA Initialization}
\author{%
\textbf{Yuzhu Chen}$^{1}$ \quad \textbf{Yingjie Wang}$^{2}$ \quad \textbf{Shi Fu}$^{1,2}$ \quad \textbf{Li Shen}$^{3}$ 
\\ \quad \textbf{Yongcheng Jing}$^{2}$ \quad \textbf{Xinmei Tian}$^1$ \quad \textbf{Dacheng Tao}$^2$ \\
$^1$University of Science and Technology of China \\
\quad $^2$Nanyang Technological University 
\quad $^3$Sun Yat-sen University
}
\begin{document}

\maketitle

\begin{abstract}
  This paper studies the crucial impact of initialization in Low-Rank Adaptation (LoRA). Through theoretical analysis, we demonstrate that the fine-tuned result of LoRA is highly sensitive to initialization, which is likely to lead suboptimal low-rank results. While this issue can be mitigated by adjusting the initial direction towards the main singular vectors of the target $\Delta W$, which is, however, typically unknown in real-world scenarios. To approximate this initial direction, we propose High-Rank Preheating (HRP), which first trains LoRA with a higher preheating rank for a few steps, then uses the main singular vectors of the derived $BA^\top$ as initialization for the main fine-tuning process. With only a modification in the initial direction, we prove that HRP makes LoRA achieve better fine-tuned results than random initialization in expectation, and the enhancement grows with the preheating rank. We validate our theoretical findings through extensive experiments in various models and tasks, where HRP significantly enhances LoRA's effectiveness and outperforms other initialization strategies and other LoRA variants.
\end{abstract}

\section{Introduction}

Recent advances in foundation models, especially large language models, have achieved remarkable success in a diverse range of applications~\cite{bommasani2021opportunities,touvron2023llama,achiam2023gpt,fu2024championing,fu2024towards}. Nevertheless, owing to their substantial scale, the conventional full-parameter fine-tuning (FPFT) approach, where all the model's parameters are updated for specialized tasks, has become progressively more formidable and inefficient. Parameter-efficient fine-tuning methods concentrate on integrating lightweight adapters, thus substantially diminishing computational and storage demands~\cite{hu2021lora,liu2022few,kumar2022fine}. This transition not only renders the fine-tuning process more tractable but also unlocks new prospects for deploying these potent models in resource-constrained settings. 

% PEFT to LoRA
A leading technique in this area is Low-Rank Adaptation (LoRA)~\cite{hu2021lora}, which introduces lightweight low-rank adapters to the pre-trained weight matrices. LoRA has been extensively applied and has manifested substantial achievements in tailoring large language models~\cite{hu2021lora,mao2025survey} and image generation models~\cite{filatov2023low,ji2024advlora} for a variety of downstream applications. Although LoRA presents significant computational benefits in practical scenarios, it still proves less effective than FPFT when computational cost is not a primary consideration~\cite{biderman2024lora}. 

% LoRA enhancement
To enhance LoRA's effectiveness, many variants have emerged, with initialization improvement being one line of approach. In classic LoRA, one adapter is initialized with a zero matrix and the other with a random matrix, which makes the fine-tuning process begin with a random direction. Methods like PiSSA~\cite{meng2024pissa} and LoRA-GA~\cite{wang2024lora} use Singular Value Decomposition (SVD) of pre-trained weights and gradients for initialization, proving the significance of LoRA initialization. However, these methods rely heavily on pre-trained models and lack theoretical guarantees for better performance, calling for more research to optimize LoRA initialization.

% Main of this paper
Delving into the optimization landscape of LoRA, this paper theoretically demonstrates that initialization plays a crucial role in achieving optimal performance. Nevertheless, random initialization leads to sub-optimal fine-tuning results, and a well-formed initialization can solve this problem. 
To approximate this well-formed initialization, we propose High-Rank Preheating (HRP), which further use a few steps of high-rank LoRA for enhancing initialization. 
% HRP not only inherits the convergence advantages related to high-rank LoRA but also keeps good generalization properties from low-rank LoRA by keeping the number of trainable parameters small.
% Experiments on GLUE show the benefits HRP brings. 
% 
Our contributions can be summarized as follows:

1. We theoretically demonstrate that initialization is important for LoRA to achieve optimal fine-tuned results. We analyze the gradient flow of classic LoRA and Asymmetric LoRA, one LoRA variant that only updates the zero-initialized matrices, and prove that: 1) with random initialization, Asymmetric LoRA can not achieve results better than random low-rank approximation in expectation, 2) classic LoRA can not achieve the best low-rank approximation from some initialization and has a similar dynamic with Asymmetric LoRA in the beginning stage, and 3) with wise initialization, both Asymmetric LoRA and classic LoRA converge exponentially to the best low-rank approximation. 

2. To approach the wise initialization suggested in theory, we propose High-Rank Preheating (HRP) that employs a few more steps of Asymmetric LoRA optimization before the main fine-tuning process for superior LoRA initialization. HRP sets the main singular vectors of the derived $BA^\top$ as LoRA initial directions, which approximate the main singular vectors of the target $\Delta W$. 
With only a modification in initialization, we prove that HRP makes LoRA achieve better converged results than random initialization in expectation.

3. To validate our theoretical findings and evaluate the effectiveness of HRP, we conducted experiments on natural language understanding (NLU) tasks and natural language generation (NLG) tasks across various models. In the results, HRP makes LoRA outperform most of its other variants and achieves comparable performance with full-parameter fine-tuning, demonstrating its effectiveness in real-world scenarios. With suitable hyperparameters, HRP requires relatively negligible more time and no more GPU memory.

\section{Related Work}
\label{sec:related}
% In this section, we provide an overview of the related work, including works on the analysis of initialization, LoRA variations, and theory results about matrix factorization. 

\paragraph{Role of initialization.}
Parameter initialization is one of the initial elements that largely account for the final model performance \cite{glorot2010understanding,mishkin2015all}.
Existing initialization methods are designed to control the norms of network parameters via Gaussian initialization \cite{he2015delving} or orthonormal matrix initialization \cite{saxe2013exact} with different variance patterns.
Currently, learning-based initialization methods are explored: \cite{dauphin2019metainit} proposes to optimize the curvature, \cite{zhu2021gradinit} suggests optimizing the loss reduction of the first stochastic step, while \cite{yang2022towards} optimizes the cosine similarity of sample-wise gradients. 

The initialization for LoRA is also a hot topic in previous research. 
\cite{hayou2024impact} study the difference between the left sketch and the right sketch from a stability perspective. \cite{buyukakyuz2024olora} leverages orthonormal matrix initialization through QR decomposition. \cite{meng2024pissa,wang2024lora} initialize adapters with the principal components of the weight matrices and their gradients in pre-trained models. \cite{li2024crucial} brings Nyström initialization to LoRA for better convergence. Compared to these works, our method does not require further knowledge or pre-trained weights. 

\paragraph{LoRA variations.}
Since introducing the original LoRA technique \cite{hu2021lora}, various efforts have been made to enhance LoRA further. \cite{zhang2023adalora} adaptively allocates the parameter budget among weight matrices. \cite{zhu2024asymmetry} freezes the random-initialized matrices for better generalization. \cite{xia2024chain,malinovsky2024randomized} suggest using a chain of LoRA for better expressive power. 
To further decrease the number of trainable parameters, \cite{balazy2024lora,ponkshe2024initialization} suggest injecting small matrices between LoRA blocks, and \cite{kopiczko2023vera,renduchintala2023tied,song2024sharelora} suggest sharing LoRA weights across different modules. Compared to these works, this paper focuses on enhancing LoRA from the perspective of initialization. 

\paragraph{Matrix factorization.}
We also note some works about matrix factorization here. Matrix factorization considers approximating a target matrix by two multiplied matrices \cite{chi2019nonconvex}. Theoretical works show that this training paradigm converges in both symmetric \cite{tarmoun2021understanding,min2021explicit} and asymmetric \cite{ye2021global,wind2023asymmetric} settings when adapters are initialized to small random matrices and the target has low rank. Compared to these works, this paper focuses more on the realistic setting for LoRA, where the target may have a high rank and initialization is not small.

\section{Theory: Why Initialization Matters}
\label{sec:theory}
In this section, we investigate the gradient flow of LoRA, demonstrating how its initialization influences the fine-tuned result. In the setting of matrix factorization, we show that LoRA with random initialization performs poorly in subsection~\ref{sec3.2} while LoRA with wise initialization performs well in subsection~\ref{theory-observation}.
% Specifically, we show: 1) the random initialized matrix is sometimes almost NOT optimized, yielding the dynamic of LoRA close to AsymLoRA~\cite{zhu2024asymmetry}, 2) with high probability of initialization, AsymLoRA can NOT converge to global minima in linear regression, and 3) at the beginning of LoRA, a better direction leads faster growth. 

\subsection{Framework}\label{framework}
LoRA~\cite{hu2021lora} adapts pre-trained models by updating weights through the product of two low-rank matrices scaled by a multiplier. Specifically, for a sub-module $W^{\operatorname{pre}}\in \mathbb{R}^{b\times a}$ in the pre-trained model, a $r$-rank LoRA takes $A\in\mathbb{R}^{a\times r}$ and $B\in\mathbb{R}^{b\times r}$ as adapters, and the weight adaption is given by 
$$W^{\operatorname{init}}\to W^{\operatorname{init}} + \frac{\alpha}{r} BA^\top,$$ 
where $\alpha$ denotes the scaling factor and the initial weights $W^{\operatorname{init}}=W^{\operatorname{pre}}$ are kept frozen during optimization. Only the parameters of $A$ and $B$ are updated through optimization algorithms. 

In this paper, we study the initialization of $A$ and $B$ through gradient flow, which approximates gradient descent. For loss function $\mathcal{L}$, the update rule of gradient descent is given by:
\begin{align*}
    &A_{t+1}=A_t-\frac{\eta_A\alpha}{r} \nabla_{W}\mathcal{L}\left(W+\frac{\alpha}{r}B_t^\top A_t\right)^\top B_t,~~~~
    % \\&
    B_{t+1}=B_t-\frac{\eta_B\alpha}{r} \nabla_{W}\mathcal{L}\left(W+\frac{\alpha}{r}B_t^\top A_t\right) A_t,
\end{align*}
where $\eta_A$ is the learning rate for optimizing $A$ while $\eta_B$ is for $B$. When the learning rate is sufficiently small, the update rule is a first-order approximation of the gradient flow:
\begin{align}
    \dot{A_t}=-\eta_A G_t^\top B_t,~~~~~~
    \dot{B_t}=-\eta_B G_t A_t,
    \label{abflow}
\end{align}
where $G_t$ denotes gradient $\frac{\alpha}{r} \nabla_W\mathcal{L}\left(W+\frac{\alpha}{r}B_t^\top A_t\right)$, notion $\dot{A_t}$ denotes $\frac{dA_t}{dt}$ and $\dot{B_t}$ is defined similarly. 

For the initialization of $A_t$ and $B_t$, \textbf{zero+random initialization schema} is widely used in LoRA, where one adapter is initialized to a zero matrix and another to a random matrix. Every element in the random-initialized matrix is independently and identically distributed from a Gaussian distribution $\mathcal{N}(0,\sigma^2)$. We call the $A_0=O_{a\times r}$ case left sketch initialization (LSI) and the $B_0=O_{b\times r}$ case right sketch initialization (RSI).~\cite{zhu2024asymmetry} suggest orthogonal initialization, which replaces the Gaussian matrix with its top $r$ singular vectors and demonstrates similar performance to Gaussian initialization. 

Updating $A$ with $\eta_A$ while updating $B$ with $\eta_B$ is suggested by~\cite{hayou2024lora}, which contains two special and widely-used subcases: 1) \textbf{classic LoRA} where $\eta_A=\eta_B=\eta$, and 2) \textbf{Asymmetric LoRA} where $\eta_A=0,\eta_B=\eta$ for RSI and $\eta_A=\eta,\eta_B=0$ for LSI. The classic LoRA is the original version proposed in~\cite{hu2021lora}. Asymmetric LoRA is a variant that freezes the random-initialized matrix, which reduces the number of trainable parameters and allows a higher rank. In this paper, we focus on how $A_0, B_0$ affect the performance of $A_t, B_t$ under these two updating strategies.

\subsection{Random initialization leads bad performance}\label{sec3.2}

As studied in~\cite{zeng2023expressive}, the expressive power of LoRA for fully connected neural networks and the Transformer architecture requires the capability of achieving the best low-rank approximation to some well-trained sub-modules during optimization (the analysis is optimization-free and thus also applicable for Asymmetric LoRA). Formally, for a target model $W^{\operatorname{target}}$, LoRA adapter $\frac{\alpha}{r}B_t A_t^\top$ is expected to achieve the best rank-$r$ approximation of $W^{\operatorname{target}}-W^{\operatorname{init}}$. 

This expected problem is expressed well in the problem of matrix factorization, where the loss function is the Frobenius norm toward the target. With no loss of generality, we consider $W^{\operatorname{init}}=O_{b\times a}$ while treating $M=W^{\operatorname{target}}-W^{\operatorname{init}}$ as target for $\frac{\alpha}{r}B_t A_t^\top$. Then, the loss function $\mathcal{L}$ and gradient $G_t$ can be expressed as
\begin{align}
    \label{matrix-factorization}
    \mathcal{L}_t=\frac{1}{2}\left\|\frac{\alpha}{r}B_t A_t^\top-M\right\|_F^2,~~\text{and}~~G_t=\frac{\alpha}{r} \left(\frac{\alpha}{r}B_t^\top A_t - M \right).
\end{align}
According to Eckart-Young Theorem~\cite{eckart1936approximation}, the global minima to $\frac{\alpha}{r}B^\top A$ is the best rank-$r$ approximation of $M$, with minimum loss as follows:
\begin{align*}
    \left(\frac{\alpha}{r}B A^\top\right)^*=\sum_{i=1}^{r}\sigma_i(M)u_i(M)v_i(M)^\top,~~~~~~
    \mathcal{L}^*=\frac{1}{2}\sum_{i=r+1}^{\min\{a,b\}}\sigma_i(M)^2
\end{align*}
where $\sigma_i(M)$ is the $i$-th large singular value of $M$ and $u_i(M),v_i(M)$ are the coresponding left and right singular vector. With no loss of generality, we assume the singular values of $M$ are different from each other. 
For expressive power in more complex tasks, LoRA is expected to at least achieve some points with loss similar to $\mathcal{L}^*$ in the setting of matrix factorization. Unfortunately, we show that fine-tuned results of Asymmetric LoRA and classic LoRA are likely to have a significantly higher loss. 

\paragraph{Asymmetric LoRA struggles to perform well.}
% We begin by examining the optimization landscape of Asymmetric LoRA~\cite{zhu2024asymmetry}, which distinguishes itself from the classic LoRA by keeping the random-initialized matrix from being updated. Compared to the classic LoRA with an equivalent rank, Asymmetric LoRA results in a reduction of trainable parameters, thereby enhancing generalization capabilities. Its success in experiments demonstrates that the frozen parameters do not substantially compromise the convergence of classic LoRA. 
%
We begin by examining the optimization landscape of Asymmetric LoRA~\cite{zhu2024asymmetry}. Due to the fixed random-initialized LoRA adapter, the derived $\frac{\alpha}{r}B_t A_t$ is limited by the direction of the frozen adapter, which is always not fit to target $M$. Theoretically, we show that when the frozen adapter is randomly settled, the expectation loss of Asymmetric LoRA is lower bounded by the loss of random rank-$r$ approximation, which is sometimes much higher than $\mathcal{L}^*$. 
Specifically, we have the following theorem. 
\begin{theorem}\label{ms-loss}
    Consider Asymmetric LoRA under objective~\ref{matrix-factorization} in gradient flow~\ref{abflow} with the frozen adapter from Gaussian initialization or orthogonal initialization. For LSI and RSI, we have for all $t>0$:
    \begin{align*}
        \mathbb{E}_{LSI}\left[\mathcal{L}_t\right]\geq \frac{b-r}{2b}\sum_{i=1}^{\min\{a,b\}}\sigma_i(M)^2,~~~~~~
        \mathbb{E}_{RSI}\left[\mathcal{L}_t\right]\geq \frac{a-r}{2a}\sum_{i=1}^{\min\{a,b\}}\sigma_i(M)^2,
    \end{align*}
    where $\mathbb{E}$ represents the expectation with respect to randomness in initialization. The inequality becomes an equality when $t\to\infty$. 
\end{theorem}
Proof is included in Appendix~\ref{ms-loss-proof}. Theorem~\ref{ms-loss} shows that from the perspective of expectation loss, Asymmetric LoRA can not achieve results better than random rank-$r$ approximation with a random frozen adapter. Compared with the best rank-$r$ approximation, the lower bound is much higher than $\mathcal{L}^*$ when the singular values of target $M$ diverge a lot, e.g. when $M$ has low effective rank. As observed by~\cite{wang2023cuttlefish}, the rank-dimension ratio of well-trained neural networks is usually relatively small, indicating a low effective rank structure for both $W^{\operatorname{pre}}$ and $W^{\operatorname{target}}$, thus also for $M=W^{\operatorname{target}}-W^{\operatorname{init}}$. Therefore, under zero+random initialization schema, the performance of Asymmetric LoRA is always much less than is expected. 

Besides, we acknowledge that the lower bound of Theorem~\ref{ms-loss} decreases linearly with the LoRA rank $r$. This decreasing rate is more pronounced than that of $\mathcal{L}$, suggesting that the performance gap to the best rank-$r$ approximation diminishes as $r$ increases. Therefore, employing a higher rank in Asymmetric LoRA is a way to enhance performance. Specifically, when using Asymmetric LoRA with full rank ($r=\min{a,b}$), it can converge to global minima with probability one. However, totally using higher rank for main fine-tuning process is equivalent to FPFT, which needs more computation and GPU memory usage in practical applications. 

% \begin{theorem}
%     \label{ms-pr}
%     Under the conditions of Theorem~\ref{ms-loss}, with the additional assumption that $r < \min{a,b}$, we have:
%     \begin{align*}
%         \operatorname{Pr}\left[\mathcal{L}(\lim_{t\to\infty}X_t)=\mathcal{L}^*\right]=0,
%     \end{align*}
%     where $\operatorname{Pr}$ represents the probability with respect to randomness in initialization. 
% \end{theorem}
% Proof is included in Appendix~\ref{ms-pr-proof}. Aside from the high expected loss, Theorem~\ref{ms-pr} further reveals that Asymmetric LoRA has zero probability of converging to the optimal rank-$r$ approximation of $M$ in matrix factorization problems. This indicates that the high expected loss is not merely due to occasional poor outcomes, but rather stems from the inherent nature of zero+random initialization Asymmetric LoRA to converge to arbitrary low-rank solutions. 

Compared with other theoretical works in matrix factorization~\cite{tarmoun2021understanding, min2021explicit, ye2021global, wind2023asymmetric}, which demonstrates that the gradient flow of matrix factorization converges to global minima, our approach suggests a different conclusion because of difference in the following ways: 1) we allow the target matrix $M$ to be of high rank, which may not be fully approximate-able by a low-rank matrix $\frac{\alpha}{r}BA^\top$; 2) we consider the case where adapter matrices are initialized as zero and random, respectively, whereas the aforementioned works assume both matrices are initialized with small random values; and 3) our objective is Asymmetric LoRA, where only one matrix is optimized, which is different from their approaches.

\paragraph{Classic LoRA has similar properties.}
\label{lora-approx-assymetric}
Different from Asymmetric LoRA, classic LoRA lets the random-initialized to be updated, thus has more trainable parameters and is more expressive. Though theory papers in matrix factorization~\cite{tarmoun2021understanding, min2021explicit, ye2021global, wind2023asymmetric} suggest that optimizing both $A$ and $B$ results in converging to $M$ with high probability when both $A$ and $B$ are initialized from small random matrices, we demonstrate that under zero+random initialization and fine-tuning schema, classic LoRA also struggles to perform well. First, we show that for some special initialization, classic LoRA can not achieve to the best low-rank result in matrix factorization. 
\begin{theorem}\label{lora-bad}
    For classic LoRA under objective~\ref{matrix-factorization} in gradient flow~\ref{abflow}, if there exists $i\leq r$ making the initial $A_0$ and $B_0$ satisfy $A_0^\top v_i(M)=O_a$ and $B_0^\top u_i(M)=O_b$, we have for all $t\geq 0$:
    \begin{align*}
        \left(\frac{\alpha}{r}B_t A_t^\top\right)v_i=O_b,~~~\text{and}~~~u_i^\top\left(\frac{\alpha}{r}B_t A_t^\top\right)=O_a^\top,
    \end{align*}
    resulting in for all $t\geq 0$:
    \begin{align*}
        \mathcal{L}_t-\mathcal{L}^*\geq \frac{1}{2}[\sigma_i(M)^2-\sigma_{r+1}(M)^2]>0,
    \end{align*}
    where $\sigma_i(M)$ is the $i$-th large singular value of $M$ and $u_i(M),v_i(M)$ are the corresponding left and right singular vector. 
\end{theorem}
Proof is included in Appendix~\ref{lora-bad-proof}. Theorem~\ref{lora-bad} shows that when the initialization of LoRA lies on a subspace orthogonal to some $u_i(M)v_i(M)^\top$ from the target, the orthogonal property will always hold in $X_t$, leading to a failure in converging to the best low-rank result. 
In zero+random initialization, the orthogonal property to the zero-initialized adapter is always satisfied ($A_0^\top v_i(M)=O_a$ for LSI and $B_0^\top u_i(M)=O_b$ for RSI). This means that once the random-initialized matrix is orthogonal to some main singular vectors of the target, classic LoRA can never achieve the best rank-$r$ approximation. 
The target is always unknown in the initializing stage, thus these bad directions can not be easily avoided from random initialization. 

Aside from these specific initialization directions, we also observe that classic LoRA has a similar optimization dynamic with Asymmetric LoRA when the fine-tuning process is not sufficiently comprehensive, thus sharing similar properties including sub-optimal performance. This similar dynamic is empirically supported by experiments~\cite{zhu2024asymmetry} where Asymmetric LoRA obtains similar results with classic LoRA in real-world scenarios. To theoretically demonstrate this similarity, we have the following theorem. 
\begin{theorem}\label{lora-similar}
    Consider the gradient flow of classic LoRA $B_t,A_t$ and that of Asymmetric LoRA $\tilde{B_t},\tilde{A_t}$ in the same zero+random initialization. Assume computed gradient $G_t$ is bounded by $R$ and is Lipschitz in the Frobenius norm, then the difference between Asymmetric LoRA and classic LoRA is upper bounded by
    \begin{align}
        \left\|\frac{\alpha}{r}B_t A_t^\top - \frac{\alpha}{r}\tilde{B_t} \tilde{A_t}^\top\right\|_F=O\left(\eta^4R^3rt^4\right),
    \end{align}
    for small $t$ when they are under a same gradient calculator. 
\end{theorem}
Proof is included in Appendix~\ref{lora-similar-proof}. 
In fine-tuning tasks, pre-trained models have already acquired powerful capabilities from training on other tasks, enabling their effectiveness in real-world applications. Therefore, the fine-tuning stage is expected to make only modest modifications to the model parameters, making the assumption of bounded gradient norm reasonable. Additionally, assuming the gradient calculator is Lipschitz in the Frobenius norm is justified because the backward propagation process of neural networks exhibits this property in bounded domains. Besides, this Lipschitz assumption is widely adopted in other theoretical works~\cite{patel2022global}.

Theorem~\ref{lora-similar} tells that in many fine-tuning tasks, classic LoRA exhibits dynamics similar to Asymmetric LoRA with the same zero+random initialization, particularly during the early stages of training (small $t$). As addressed above, Asymmetric LoRA tends to achieve random low-rank results rather than the best low-rank approximation. Given this dynamic similarity, classic LoRA inherits the same limitation when the fine-tuning process is not sufficiently comprehensive.

\subsection{Wise initialization leads good convergence}\label{theory-observation}

Then, we illustrate that the above limitation of Asymmetric LoRA and classic LoRA is mainly attributed to initialization. Specifically, we show that by merely altering the random initialization to a well-formed direction, both Asymmetric LoRA and classic LoRA converge to the global minima exponentially in the matrix factorization problem. 
\begin{theorem}
    \label{asym-wise}
    Consider Asymmetric LoRA under objective~\ref{matrix-factorization} in gradient flow~\ref{abflow}. For RSI with $A_0=\sum_{i=1}^{r}v_i(M)e_{i,r},B_0=O_{b\times r}$ or LSI with $A_0=O_{a\times r},B_0=u_i(M)e_{i,r}$, we have
    \begin{align*}
        \mathcal{L}_t-\mathcal{L}^*=O\left(\exp\left\{-2\alpha\eta t/r\right\}\right),
    \end{align*}
    where $v_i(M)$ and $u_i(M)$ are the $i$-th right and left singular vector of $M$ and are defined in Theorem~\ref{lora-bad}. $e_{i,r}$ is the $1\times r$ vector with the i-th element being 1 and all other elements being 0. 
\end{theorem}
Proof is included in Appendix~\ref{asym-wise-proof}. 
Theorem~\ref{asym-wise} shows that when initialized with the main singular vectors of the target, Asymmetric LoRA has exponential convergence to the best rank-$r$ approximation of $M$. Compared with Theorem~\ref{ms-loss}, the only difference in the setting lies in the initialization. Meanwhile, the conclusion shifts from `can not perform better than random rank-$r$ approximation' to `converges to the best rank-$r$ approximation exponentially'. Therefore, the performance limitation of Asymmetric LoRA is attributed to random initialization. Furthermore, when the initialization is appropriately configured, optimizing a single LoRA adapter while freezing another one is sufficient to ensure performance in matrix factorization.

\begin{theorem}
    \label{lora-wise}
    Consider classic LoRA under objective~\ref{matrix-factorization} in gradient flow~\ref{abflow}. For initialization same as Theorem \ref{asym-wise}, we have
    \begin{align*}
        \mathcal{L}_t-\mathcal{L}^*=O\left(\exp\left[-\frac{2\alpha\eta t}{r}-\frac{\alpha\eta t}{r}\left(-1+\sqrt{1+\frac{2r^2}{\alpha^2} \sigma_r(M)^2\left[1-e^{-\frac{\alpha\eta t}{2r}}\right]^2 }\right)\right]\right),
    \end{align*}
    where $\sigma_i(M)$ is the $i$-th singular value of $M$ and is defined in Theorem~\ref{lora-bad}.
\end{theorem}
Proof is included in Appendix~\ref{lora-wise-proof}. 
Theorem~\ref{lora-wise} demonstrates that the initialization suggested in~\ref{asym-wise} also makes classic LoRA converge to the best rank-$r$ result exponentially, thereby avoiding the unfavorable scenario described in Theorem~\ref{lora-bad}. Similar to analysis in Asymmetric LoRA, the performance limitation of classic LoRA is also attributed to random initialization. It is also noteworthy that with the same well-formed initialization, the convergence rate of classic LoRA has an extra positive term $\frac{\alpha\eta t}{r}\left(-1+\sqrt{1+\frac{2r^2}{\alpha^2} \sigma_r(M)^2\left[1-e^{-\frac{\alpha\eta t}{2r}}\right]^2 }\right)$ from the convergence rate of Asymmetric LoRA, indicating a better fine-tuned result in the perspective of convergence. 

In practice with pre-trained model modules $W^{\operatorname{pre}}$ and its well fine-tuned version $W^{\operatorname{target}}$, LoRA is expected to make each $\frac{\alpha}{r}B_iA_i^\top$ a best rank-$r$ approximation of $W_i^{\operatorname{target}}-W_i^{\operatorname{pre}}$. However, calculating the SVD decomposition of $W_i^{\operatorname{target}}-W_i^{\operatorname{pre}}$ is always impossible because $W_i^{\operatorname{target}}$ is unknown. 
Previous work proposes some initialization methods for LoRA, such as PiSSA~\cite{meng2024pissa} and LoRA-GA~\cite{wang2024lora}. PiSSA~\cite{meng2024pissa} considers using main singular vectors of $W^{\operatorname{pre}}$ as initialization while LoRA-GA~\cite{wang2024lora} suggests using main singular vectors of $\nabla_{W^{\operatorname{pre}}}\mathcal{L}(W^{\operatorname{pre}})$. Their methods rely on the information from a single weight point $W^{\operatorname{pre}}$, which could be easily affected by noise from the pre-trained model and gradient noise. 
% However, their methods rely on the similarity between $W_i^{\operatorname{target}}-W_i^{\operatorname{pre}}$ and $W_i^{\operatorname{pre}}$. 

\section{Method: High-Rank Preheating}\label{sec:hrp}

\begin{figure}
    \centering
    \includegraphics[width=.45\textwidth, bb=0 0 461.07 345.41]{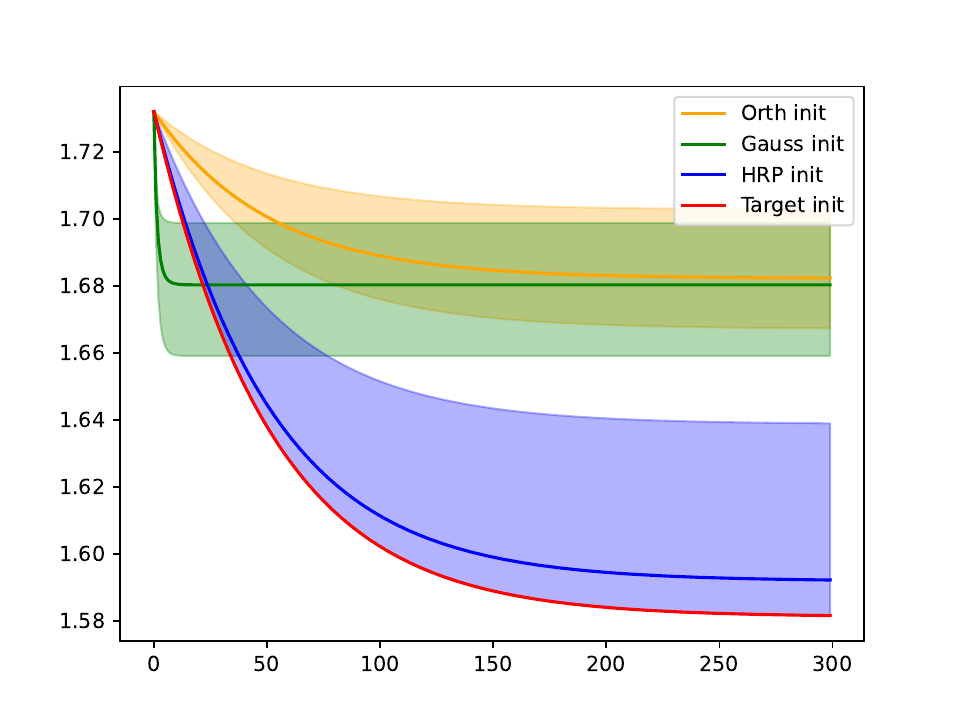}
    \includegraphics[width=.45\textwidth, bb=0 0 461.07 345.41]{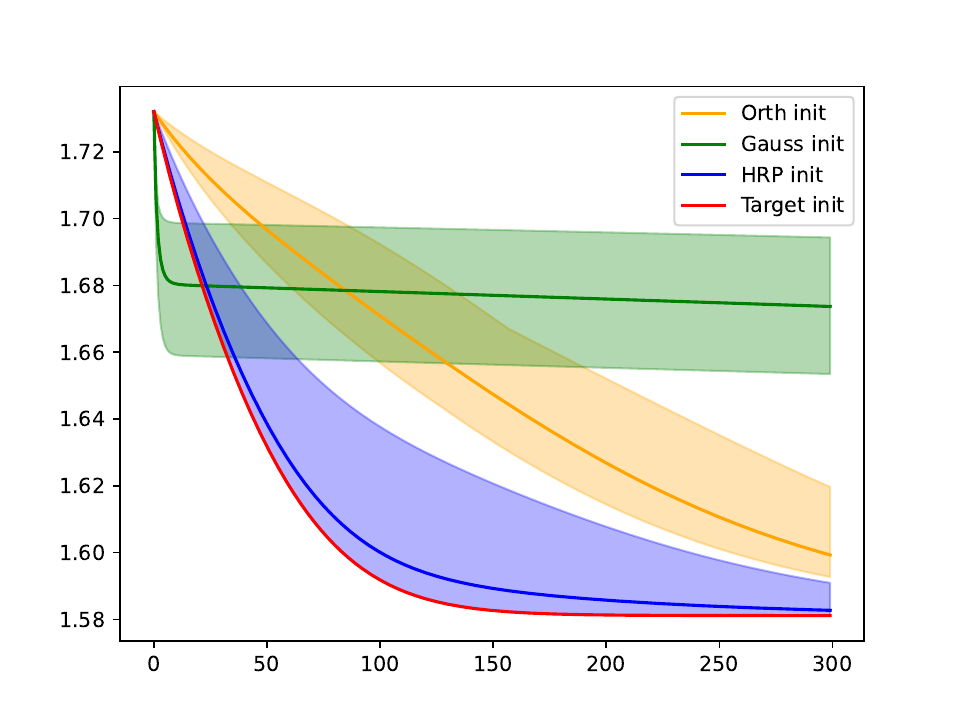}
    \caption{\label{fig-mf}Loss curves for matrix factorization targeting $M=\operatorname{diag}(I_{12},O_{20\times 20})$ with $r=2$. Left: classic LoRA in Gaussian initialization, orthogonal initialization, HRP derived initialization with $\operatorname{hrp\_rank}=6$, and target initialization (suggested in Theorem~\ref{asym-wise}). Right: Asymmetric LoRA in the same initialization strategies. }
\end{figure}

% In this section, we introduce High-Rank Preheating (HRP), our proposed LoRA initialization algorithm for addressing the weaknesses identified in Section~\ref{sec3.2}. 
% The target of HRP is approaching the wise initialization suggested in Theorem~\ref{asym-wise}. 

Though Asymmetric LoRA can not achieve the best rank-$r$ approximation of the target directly, it does approximate the target in some random direction, thus bringing out some information about the target implicitly. HRP leverages this implicit information to approximate the wise initialization suggested in Theorem~\ref{asym-wise}. 
Specifically, when using the RSI strategy, HRP can be formulated as Algorithm~\ref{hrp-alg}. 

\begin{algorithm}\label{hrp-alg}
\caption{High-Rank Preheating for RSI}
\KwIn {HRP parameters $\operatorname{hrp\_rank},\operatorname{hrp\_step}$ and $\operatorname{hrp\_bs}$, rank $r$ for LoRA, pre-trained model $W$}
\begin{algorithmic}[1]
\STATE Select random orthogonal matrix $\hat{U}\in\mathbb{R}^{b\times b}$ with $i$-th column $\hat{u_i}$
\STATE High-rank preheating in batch size $\operatorname{hrp\_bs}$: $$\hat{A}, \hat{B}_0\leftarrow \operatorname{hrp\_step}\text{ steps AsymLoRA with LSI: }\hat{A}_0=O_{a\times r}, \hat{B}_0=\sum_{i=1}^{\operatorname{hrp\_rank}}\hat{u_i}e_{i,\operatorname{hrp\_rank}}^\top$$
\STATE Calculate first $r$ right singular vectors $v_i(\hat{B}\hat{A}_0^\top)$ for $i=1,\cdots,r$
\STATE Fine-tuning $$A, B\leftarrow \text{ LoRA fine-tuning with RSI: }A_0=\sum_{i=1}^rv_i(\hat{B}_0\hat{A}^\top)e_{i,r}^\top, \hat{B}_0=O_{b\times r}$$
\end{algorithmic}
\KwOut {fine-tuned model $W+\frac{\alpha}{r}BA^\top$}
\end{algorithm}

To illustrate why HRP approximates the well-formed initialization, we take LoRA for RSI in matrix factorization as an example, where the well-formed initialization requires the main $r$ right singular vectors to initialize $A$. One step HRP updates $\hat{A_0}$ from $O_{a\times r}$ to 
\begin{align*}
    \hat{A}_1 = \hat{A}_0-\frac{\eta_A\alpha}{r} \nabla_{W}\mathcal{L}\left(W+\frac{\alpha}{r}\hat{B}_0^\top \hat{A}_0\right)^\top \hat{B}_0 = \frac{\eta\alpha}{r}M^\top\hat{B}_0,
\end{align*}
while the derived $\Delta W$ updates from $\frac{\alpha}{r}\hat{B}_0\hat{A}_0^\top=O_{b\times a}$ to 
\begin{align*}
    \frac{\alpha}{r}\hat{B}_1\hat{A}_1^\top = \frac{\eta\alpha^2}{r^2}\hat{B}_0\hat{B}_0^\top M = \frac{\eta\alpha^2}{r^2}\hat{U}\begin{pmatrix}
        I_{\operatorname{hrp\_rank}}\\&O_{(b-\operatorname{hrp\_rank})\times (b-\operatorname{hrp\_rank})}
    \end{pmatrix}\hat{U}^\top M.
\end{align*}
Due to the random selection of $\hat{U}$, $\frac{\alpha}{r}\hat{B}_1\hat{A}_1^\top$ tends to have similar main singular vectors with $M$. With a higher $\operatorname{hrp\_rank}$, more effective directions from $\hat{U}$ are used for approaching $M$, thus approaching $M$ better. Specifically, we prove that HRP achieves the initialization that leads to better converged results in matrix factorization. 
\begin{theorem}\label{hrp-theory}
    For any $\operatorname{hrp\_step}>r$ in objective~\ref{matrix-factorization} with gradient flow, HRP initialization makes Asymmetric LoRA converge to results with expectation loss
    \begin{align*}
        \mathbb{E}_{\operatorname{HRP}}\mathcal{L}_{\infty}\leq\sum_{i=r+1}^{\operatorname{hrp\_rank}}\sigma_i(M)^2+\frac{a-\operatorname{hrp\_rank}}{2a}\sum_{i=1}^{\max(a,b)}\sigma_i(M)^2.
    \end{align*}
\end{theorem}
Proof is included in Appendix~\ref{hrp-theory-proof}. 
Compared with the lower bound of Theorem~\ref{ms-loss}, the upper bound of expectation loss in Theorem~\ref{hrp-theory} modifies $\operatorname{hrp\_rank}-r$ terms of the averaged singular value with $r+1$-th to $\operatorname{hrp\_rank}$-th singular value. This means that when the target $W^{\operatorname{target}}-W^{\operatorname{pre}}$ has a small effective rank, which is observed by \cite{wang2021pufferfish,wang2023cuttlefish} in practice, HRP improves the converged result a lot. Besides, we note that the upper bound is decreasing with $\operatorname{hrp\_rank}$ growing, which is why we use a higher rank for preheating. 

% Theorem~\ref{hrp-theory} tells that when target $M$ is able to be perfectly represented by LoRA adapters, HRP gets the initialization that enables Asymmetric LoRA converge $M$ exponentially. 
% We note that assuming $W^{\operatorname{target}}-W^{\operatorname{pre}}$ to have a low rank is reasonable in practice. \cite{wang2021pufferfish,wang2023cuttlefish} observe a stabilizing effect in the stable ranks of neural network layers during training, indicating both $W^{\operatorname{pre}}$ and $W^{\operatorname{target}}$ having small stable rank. 

Aside from theoretical analysis, we conduct numerical experiments and show results in Figure~\ref{fig-mf}. As shown, HRP indeed achieves better performance than random initialization, while approaching the performance of target initialization. 

In real-world scenarios, HRP requires some modifications for fit more complex problems. Firstly, to mitigate the problem of gradient noise, HRP needs more than one preheating step in real-world experiments. Besides, HRP uses high rank Asymmetric LoRA, which requires more GPU memory. To mitigate this problem, we select a smaller batch size in the HRP process to save GPU memory cost. Furthermore, calculating SVD decomposition is sometimes of high cost when dealing with wide neural networks. To accelerate HRP, we use the random SVD algorithm~\cite{voronin2015rsvdpack} in step 3 of Algorithm~\ref{hrp-alg}. 

Compared with totally fine-tuning with high-rank LoRA, HRP maintains the same number of trainable parameters as low-rank LoRA in the main fine-tuning process, thereby preserving the better generalization guarantee against totally high-rank LoRA (refer to Lemma 4.5 in~\cite{zhu2024asymmetry} where generalization error is upper bounded by $O(\sqrt{r})$). Furthermore, with the decreased batch size in HRP, it costs much more time for totally fine-tuning high-rank LoRA with the same epochs as HRP-initialized low-rank LoRA. 

We also compare HRP with other LoRA initialization - PiSSA~\cite{meng2024pissa} and LoRA-GA~\cite{wang2024lora}, where neither $A_0$ nor $B_0$ is initialized as a zero matrix and $W^{\operatorname{init}}$ is settled to $W^{\operatorname{pre}}-\frac{\alpha}{r}B_0A_0^\top$ to ensure that $W+\Delta W$ is fine-tuned from $W^{\operatorname{pre}}$. This has implications for checkpoint storage, where sometimes only the trained LoRA weights need to be shared. While with these methods, either both $B^{\operatorname{init}}, A^{\operatorname{init}}$ and $B^{\operatorname{tuned}}, A^{\operatorname{tuned}}$ need to be stored, or the weights must be merged, both of which require more memory. Compared with their methods, HRP keeps the properties $B_0A_0^\top=O_{b\times a}$ and $W^{\operatorname{init}}=W^{\operatorname{pre}}$ from LoRA, thus does not require more checkpoint storage.

\begin{table*}
    \centering
    \caption{\label{glue}Results with T5-base on tasks from a subset of the GLUE benchmark. \textcolor{blue}{Blue} marked denotes the best result across different initialization, while \textcolor{red}{red} marked denotes the best result over all baselines.}
\begin{tabular}{lllllll}
\toprule
 & CoLA & MRPC & QNLI & RTE & STS-B & Avg. \\\hline
 AdaLoRA & $54.78_{ \pm 1.83 }$ & $88.64_{ \pm 0.31 }$ & $92.68_{ \pm 0.07 }$ & $74.13_{ \pm 1.62 }$ & $88.50_{ \pm 0.31 }$ & $79.75_{ \pm 0.40 }$ \\
 rsLoRA & $54.91_{ \pm 1.06 }$ & $89.05_{ \pm 0.31 }$ & $92.07_{ \pm 0.13 }$ & $73.29_{ \pm 0.88 }$ & $89.17_{ \pm 0.16 }$ & $79.70_{ \pm 0.26 }$ \\
 DoRA & $54.61_{ \pm 1.87 }$ & $88.40_{ \pm 1.10 }$ & $91.96_{ \pm 0.07 }$ & $73.85_{ \pm 0.17 }$ & $88.93_{ \pm 0.36 }$ & $79.55_{ \pm 0.53 }$ \\
 FPFT & $55.85_{ \pm 0.64 }$ & \textcolor{red}{$89.46_{ \pm 1.06 }$} & \textcolor{red}{$92.50_{ \pm 0.14 }$} & $74.01_{ \pm 1.93 }$ & \textcolor{red}{$89.34_{ \pm 0.57 }$} & \textcolor{red}{$80.23_{ \pm 0.42 }$} \\\hline
 LoRA & $54.93_{ \pm 1.59 }$ & $88.32_{ \pm 0.81 }$ & $91.96_{ \pm 0.19 }$ & $73.65_{ \pm 2.06 }$ & $88.83_{ \pm 0.39 }$ & $79.54_{ \pm 0.78 }$ \\
 LoRA (orth) & $54.84_{ \pm 0.27 }$ & $88.48_{ \pm 0.53 }$ & $91.95_{ \pm 0.16 }$ & $73.89_{ \pm 2.74 }$ & $88.87_{ \pm 0.11 }$ & $79.60_{ \pm 0.57 }$ \\
 PiSSA & $55.30_{ \pm 0.53 }$ & $88.73_{ \pm 1.00 }$ & $91.74_{ \pm 0.27 }$ & $73.04_{ \pm 1.62 }$ & $88.54_{ \pm 0.21 }$ & $79.47_{ \pm 0.64 }$ \\
 LoRA-GA & $53.04_{ \pm 1.22 }$ & $88.64_{ \pm 0.81 }$ & $92.13_{ \pm 0.13 }$ & $72.08_{ \pm 2.01 }$ & $88.84_{ \pm 0.39 }$ & $78.95_{ \pm 0.53 }$ \\
 HRP (ours) & \textcolor{red}{$56.18_{ \pm 0.09 }$} & \textcolor{blue}{$88.89_{ \pm 0.61 }$} & \textcolor{blue}{$92.23_{ \pm 0.08 }$} & \textcolor{red}{$74.25_{ \pm 1.90 }$} & \textcolor{blue}{$89.04_{ \pm 0.16 }$} & \textcolor{blue}{$80.12_{ \pm 0.52 }$} \\
\bottomrule
\end{tabular}

\end{table*}

\section{Experiments}\label{exp}

In this section, we validate the effectiveness of HRP through experiments. 
We compare HRP with several baselines to demonstrate its effectiveness: 1) Full-Parameter Fine-Tuning (FPFT): updates all model parameters from pre-trained weights; 2) other LoRA variants, including: 2.1) DoRA \cite{liu2024dora}: with additional learnable magnitudes, 2.2) rsLoRA \cite{kalajdzievski2023rank}: with a scaling factor for stability, and 2.3) AdaLoRA \cite{zhang2023adalora}: with dynamically adjusted rank allocation; and 3) classic LoRA with different initialization methods: 3.1) kaiming normal initialization, 3.2) orthogonal initialization \cite{zhu2024asymmetry}, 3.3) PiSSA \cite{meng2024pissa}: first $r$ main singular vectors of $W^{\operatorname{pre}}$, and 3.4) LoRA-GA \cite{wang2024lora}: first $2r$ main singular vectors of gradient approximation. 

\subsection{Experiments on NLU tasks}
We first evaluate the performance of HRP and other LoRA variants under the natural language understanding (NLU) tasks. 
Specifically, we fine-tune the T5-base model \cite{2020t5} on a subset of GLUE \cite{wang2018glue} benchmark, including CoLA, MRPC, QNLI, RTE, and STS-B. 
%  in Table~\ref{glue}. 
For all variants of LoRA, we inject LoRA blocks to all query and value sub-modules with low rank $r=4$ and $\alpha=r$. For HRP, we set $\operatorname{hrp\_rank}=128$, $\operatorname{hrp\_bs}=16$, and $\operatorname{hrp\_step}=100$ with the same training dataset as the main fine-tuning process. 
We present more implementation details in Appendix~\ref{apd-exp-detail}. 

Results are shown in Table~\ref{glue} where performance is evaluated on the Matthews correlation coefficient for CoLA, Pearson correlation coefficient for STS-B, and accuracy for the remaining tasks. Each experiment is conducted for 3 different random seeds (fixed seeds across different methods), and both the average and standard deviation are reported. 
As demonstrated, HRP outperforms all other initialization strategies and most of the other variants while achieving results comparable to FPFT. 
% To provide deeper insights into the training dynamics, we present the loss curves in Appendix~\ref{lc-nlg}. 
Besides, we report the time and GPU cost in Table~\ref{cost}. As shown, HRP costs negligible time compared to the subsequent fine-tuning process, while it needs no more GPU memory. 
To study how HRP hyperparameters affect the fine-tuned result, we include ablation studies in Appendix~\ref{ablation}. 

\subsection{Experiments on NLG tasks}

\begin{table*}
    \centering
    \caption{\label{llm}Results with LLMs on math reasoning tasks. \textcolor{blue}{Blue} marked denotes the best result across different initialization, while \textcolor{red}{red} marked denotes the best result over all baselines. }
\begin{tabular}{l|cccc|cccc}
\hline
 & \multicolumn{4}{c}{GSM8K} & \multicolumn{4}{|c}{MATH} \\\hline
 & ~Llama~ & ~Qwen~ & Gemma & ~Avg.~ & ~Llama~ & ~Qwen~ & Gemma & ~Avg.~ \\\hline
 AdaLoRA & 44.73 & 69.98 & 54.44 & 56.38 & \textcolor{red}{15.70} & 25.72 & 15.68 & 19.03 \\
 rsLoRA & 45.72 & 69.14 & 53.75 & 56.20 & 14.90 & 26.62 & 14.86 & 18.79 \\
 DoRA & 48.37 & 70.13 & 56.18 & 58.23 & 15.08 & 26.84 & 16.26 & 19.39 \\
 FPFT & 46.93 & \textcolor{red}{71.49} & 49.66 & 56.03 & 12.78 & 26.84 & 10.76 & 16.79 \\\hline
LoRA & 44.88 & 69.22 & 51.93 & 55.34 & 13.64 & 23.48 & 13.42 & 16.85 \\
LoRA (orth) & 41.77 & 66.11 & 41.39 & 49.76 & 11.12 & 22.16 & 10.30 & 14.53 \\
PiSSA & 47.08 & \textcolor{blue}{70.89} & 56.10 & 58.02 & 15.10 & 26.52 & \textcolor{red}{16.42} & 19.35 \\
LoRA-GA & 48.29 & 69.98 & 56.48 & 58.25 & 14.54 & 26.88 & 15.54 & 18.99 \\
HRP (ours) & \textcolor{red}{48.67} & 70.66 & \textcolor{red}{58.38} & \textcolor{red}{59.24} & \textcolor{blue}{15.62} & \textcolor{red}{27.24} & 15.98 & \textcolor{red}{19.61} \\
% HRP (ours) & \textcolor{red}{47.69} & 70.89 & 21.38 & 46.65 & \textcolor{red}{18.90} & \textcolor{blue}{32.78} & 3.30 & 18.33 \\
\bottomrule
\end{tabular}
\end{table*}

\begin{table*}
    \centering
    \caption{\label{cost}Time and GPU memory cost of LoRA-GA, HRP, and the following fine-tuning process. We report the computational cost with QNLI for T5-base while MetaMathQA for other models.}
\begin{tabular}{l|rrrr|rrrr}
    \hline
    & \multicolumn{4}{c}{Time (s)} & \multicolumn{4}{|c}{GPU Memory (MB)} \\\hline
    & T5-base & Llama & Qwen & Gemma & T5-base & Llama & Qwen & Gemma \\\hline
    % LoRA-GA & - & 103s & 67s & 249s & 18.00 & 30.76 &  &  \\
    HRP & 15 & 88 & 95 & 117 & 5106 & 24414 & 35080 & 53560 \\
    Main tuning & 3194 & 1721 & 2769 & 3626 & 8040 & 38702 & 56368 & 93704 \\
\bottomrule
\end{tabular}
\end{table*}

Additionally, we conduct experiments under the natural language generation (NLG) tasks. 
We fine-tune the Llama3.2-1B model~\cite{grattafiori2024llama}, Qwen3-1.7B model~\cite{qwen3}, and Gemma2-2B model~\cite{gemma_2024} by AdamW \cite{loshchilov2019decoupledweightdecayregularization} with batch size 16 and a cosine learning rate schedule on a 50K subset of the MetaMathQA dataset \cite{yu2023metamath} for 1 epoch. Then, we evaluate the fine-tuned models on the test set of GSM8K \cite{cobbe2021gsm8k} and MATH \cite{hendrycksmath2021}. 
During fine-tuning, we fix the same random seed in all baselines and set the learning rate to $4\times 10^{-4}$ for all LoRA variants, while $5\times 10^{-5}$ for FPFT. 

For all variants of LoRA, we inject LoRA blocks for all linear sub-modules except $\operatorname{lm\_head}$, with rank $r=8$ and $\alpha=r$ in the main fine-tuning process. For HRP, we set the preheating rank $\operatorname{hrp\_rank}=256$ for $\operatorname{hrp\_bs}=8$ with $\operatorname{hrp\_step}=200$ steps in the same training dataset with the AdamW optimizer under a constant learning rate. Our model is fine-tuned using a standard supervised learning fine-tuning schema for language modeling, where the loss for the input prompt is set to zero. We present more implementation details in Appendix~\ref{apd-exp-detail}. 

We report the evaluated results in Table~\ref{llm}, which demonstrate the effectiveness of HRP across most large language models. 
However, we also observe that HRP (and even FPFT) underperforms some other variants in certain specific cases (e.g., Gemma on MATH). This discrepancy is likely attributed to differences between the training and test datasets. To provide further insights, we include loss curves in Appendix~\ref{lc-nlg}, where both HRP achieve significantly lower loss during fine-tuning compared to other LoRA variants (also for FPFT). 
Additionally, we report the computational cost in Table~\ref{cost}. As shown, HRP incurs negligible additional time compared to the subsequent fine-tuning process, while requiring no extra GPU memory.

% To provide deeper insights into the training dynamics, we present the loss curves of fine-tuning Llama2 in Figure~\ref{loss-curve}. The visualization convincingly validates our theoretical analysis, demonstrating that HRP indeed achieves superior converged results compared with random initialization. Besides, the loss curves reveal that HRP also accelerates the convergence of classic LoRA, especially in the beginning stage of fine-tuning. Aside from Llama2, we also present the loss curves of fine-tuning other models in Appendix~\ref{nlg-more}. 

\section{Conclusion}

In this paper, we first theoretically show the important role of LoRA initialization for convergence, where LoRA is likely to achieve random low-rank results with random initialization, and achieve the best low-rank results with a well-formed initialization. 
Then, to address the problem, we propose HRP, an initialization algorithm that makes LoRA achieve better fine-tuned results. HRP utilizes a few steps of high-rank LoRA optimization for preheating and computes the main singular vectors of the preheated result as initialization for the main LoRA. 
We further evaluate the effectiveness of HRP through experiments on NLU and NLG tasks and various models, where HRP makes LoRA achieve better performance compared with other initialization strategies.

% \section*{Impact Statement}

% This paper presents work whose goal is to advance the initialization of LoRA. A potential impact of this paper is guiding practitioners on more effective initialization for fine-tuning deep learning models using LoRA. After thorough consideration and analysis, it can be firmly stated that this research has no ethical aspects that could raise concerns. 

\bibliography{neurips_2025}
\bibliographystyle{plain}

\appendix

\section{Proofs of Theorems}

In this section, we present the proof for the theorems above. 
We begin with some notation, lemmas, and their proofs. 

\subsection{Notations and Lemmas}
For LoRA, what really matters the performance is the multiplied adapters $\frac{\alpha}{r}B_t A_t^\top$, which we denote as $X_t$. To study the properties of $X_t$, we define the following auxiliary values:
\begin{align*}
    Y_t=\frac{\alpha}{r}A_t A_t^\top,~~~~~Z_t=\frac{\alpha}{r}B_t B_t^\top,~~~~~G_t=\frac{\alpha}{r} \nabla_W\mathcal{L}(W+\frac{\alpha}{r}B_t^\top A_t).
\end{align*}
Then flow~\ref{abflow} can be restated as 
\begin{align*}
    \begin{cases}\label{xflow}
        dX_t=[-\eta_B G_tY_t-\eta_A Z_tG_t]dt,\\
        dY_t=[-\eta_A G_t^\top X_t-\eta_A X_t^\top G_t]dt,\\
        dZ_t=[-\eta_B G_t X_t^\top-\eta_B X_t G_t^\top]dt.
    \end{cases}
\end{align*}
With these notations, $X_t,Y_t,Z_t$ has the following properties:
\begin{enumerate}
    \item $Y_t$ and $Z_t$ are symmetric and semi-positive. 
    \item $d\operatorname{Trace}(Y_t)=d\operatorname{Trace}(Z_t)$. 
    \item For zero+random initialization, $X_t=O_{b\times a}$. 
    \item For zero+random LSI, $Y_t=O_{a\times a}$. 
    \item For zero+random RSI, $Z_t=O_{b\times b}$. 
\end{enumerate}
Besides, we denote $\|\cdot\|_F$ as the Frobenius norm, $U_X\Sigma_XV_X$ as the SVD decomposition of $X$ where $\Sigma_X=\operatorname{diag}(\sigma_1(X),\sigma_2(X)\cdots,)$ with $\sigma_1(X)\geq \sigma_2(X)\geq\cdots$. 

\begin{lemma}
    \label{asymlora-close-form}
    Gradient flow of matrix factorization~\ref{matrix-factorization} with Asymmetric LoRA in LSI ($\eta_A=\eta,\eta_B=0,A_0=O_{a\times r}$), $X_t$ has the following closed form:
    \begin{align*}
        X_t=\left[I-e^{-\eta Z_0t}\right]M.
    \end{align*}
    For matrix factorization~\ref{matrix-factorization} with Asymmetric LoRA in RSI ($\eta_A=0,\eta_B=\eta,B_0=O_{b\times r}$), $X_t$ has the following closed form:
    \begin{align*}
        X_t=M\left[I-e^{-\eta Y_0t}\right].
    \end{align*}
    Here, $e^{\operatorname{matrix}}$ denotes the exponential operation on a matrix. 
\end{lemma}
\begin{proof}
    For Asymmetric LoRA in LSI, by taking $\eta_A=\eta,\eta_B=0$ into flow~\ref{xflow}, we get
    \begin{align*}
        \begin{cases}
            dX_t=[-\eta Z_tG_t]dt,\\
            dY_t=[-\eta G_t^\top X_t-\eta X_t^\top G_t]dt,\\
            dZ_t=O_{b\times b}.
        \end{cases}
    \end{align*}
    which means $Z_t\equiv Z_0$ for all $t$, and $\dot{X_t}=-\eta Z_0(X_t-M)$. Then $X_t$ has an analytic solution 
    \begin{align*}
        X_t=X_0+\left[I-e^{-\eta Z_0t}\right]M=\left[I-e^{-\eta Z_0t}\right]M.
    \end{align*}

    For Asymmetric LoRA in LSI, by taking $\eta_A=0,\eta_B=\eta$ into flow~\ref{xflow}, we get
    \begin{align*}
        \begin{cases}
            dX_t=[-\eta G_tY_t]dt,\\
            dY_t=O_{a\times a},\\
            dZ_t=[-\eta G_t X_t^\top-\eta X_t G_t^\top]dt.
        \end{cases}
    \end{align*}
    which means $Y_t\equiv Y_0$ for all $t$, and $\dot{X_t}=-\eta (X_t-M)Y_0$. Then $X_t$ has an analytic solution 
    \begin{align*}
        X_t=X_0+M\left[I-e^{-\eta Y_0}\right]=M\left[I-e^{-\eta Y_0}\right].
    \end{align*}
\end{proof}
\begin{lemma}\label{lossineq}
    For any matrix $M\in\mathbb{R}^{d_1\times d_2}$ and any orthogonal matrix $U\in\mathbb{R}^{d_1\times d_1}$ and $r\leq d_2$, for
    \begin{align*}
        X=U\begin{pmatrix}
            I_r\\&O_{(d_1-r)\times (d_1-r)}
        \end{pmatrix}U^\top M,
    \end{align*}
    we have
    \begin{align*}
        \|M-X\|_F^2=\|M\|_F^2-\|X\|_F^2.
    \end{align*}
\end{lemma}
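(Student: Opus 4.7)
The plan is to recognize that the matrix $P := U \begin{pmatrix} I_r \\ & O_{(d_1-r)\times(d_1-r)} \end{pmatrix} U^\top$ is an orthogonal projector in $\mathbb{R}^{d_1}$, so that $X = PM$ is the column-wise orthogonal projection of $M$ onto the $r$-dimensional subspace spanned by the first $r$ columns of $U$. The claimed identity is then just the Pythagorean theorem applied in the Frobenius inner product.

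More concretely, I would first check the two algebraic properties $P^\top = P$ and $P^2 = P$, both of which follow immediately from $U^\top U = I_{d_1}$ together with the fact that $\operatorname{diag}(I_r, 0)$ is symmetric and idempotent. From these it follows that $I - P$ is also an orthogonal projector, and crucially $P(I-P) = 0$.

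Next I would decompose $M = PM + (I-P)M = X + (M - X)$ and expand
\begin{align*}
    \|M\|_F^2 \;=\; \|X\|_F^2 + \|M - X\|_F^2 + 2\,\langle X,\, M-X\rangle_F.
\end{align*}
The cross term vanishes because
\begin{align*}
    \langle PM,\, (I-P)M\rangle_F \;=\; \operatorname{tr}\bigl(M^\top P^\top (I-P) M\bigr) \;=\; \operatorname{tr}\bigl(M^\top P(I-P) M\bigr) \;=\; 0,
\end{align*}
where I used $P^\top = P$ and $P(I-P) = P - P^2 = 0$. Rearranging yields $\|M-X\|_F^2 = \|M\|_F^2 - \|X\|_F^2$, as desired.

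There is really no hard step here; the only thing to be careful about is not to confuse the roles of $M$ and the symbol $A$ that appears in the statement (the lemma is applied with the target matrix $M$ from problem \ref{matrix-sensing}). The proof is essentially a one-line consequence of $P$ being an orthogonal projector, so I would keep the writeup short and emphasize the projector structure, which is exactly what is needed when feeding this lemma into the Asymmetric LoRA limiting identity from Lemma \ref{asymlora-close-form}.
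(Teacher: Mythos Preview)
Your proof is correct and follows essentially the same route as the paper: both expand $\|M\|_F^2 = \|X\|_F^2 + \|M-X\|_F^2 + 2\operatorname{Trace}(X^\top(M-X))$ and verify the cross term vanishes, the paper by directly multiplying out the block-diagonal matrices to get zero, you by packaging that computation as $P(I-P)=0$ for the orthogonal projector $P$. The projector language is a clean way to phrase it, but the underlying algebra is identical.
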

\begin{proof}
    We have
    \begin{align*}
        \|M\|_F^2&=\|X\|_F^2+\|M-X\|_F^2+2\operatorname{Trace}(X^\top(M-X)).
    \end{align*}
    Then it is sufficient to prove $\operatorname{Trace}(X^\top X)=\operatorname{Trace}(X^\top M)$. In fact, we have
    \begin{align*}
        \operatorname{Trace}\left(X^\top X\right)
        &=\operatorname{Trace}\left(M^\top U\begin{pmatrix}
            I_r\\&O_{(d_1-r)\times (d_1-r)}
        \end{pmatrix}U^\top\cdot U\begin{pmatrix}
            I_r\\&O_{(d_1-r)\times (d_1-r)}
        \end{pmatrix}U^\top M\right)\\
        &=\operatorname{Trace}\left(M^\top U\begin{pmatrix}
            I_r\\&O_{(d_1-r)\times (d_1-r)}
        \end{pmatrix}U^\top M\right)\\
        &=\operatorname{Trace}\left(X^\top M\right).
    \end{align*}
\end{proof}
\begin{lemma}\label{sigma-order}
    For any matrix $M\in\mathbb{R}^{d_1\times d_2}$ and any orthogonal matrix $U\in\mathbb{R}^{d_1\times d_1}$, and all $i,r<\max(d_1,d_2)$, we have
    \begin{align*}
        \sigma_i\left(U\begin{pmatrix}
            I_r\\&O_{(d_1-r)\times (d_1-r)}
        \end{pmatrix}U^\top M\right)\leq\sigma_i(M).
    \end{align*}
\end{lemma}
\begin{proof}
    For all $v\in\mathbb{R}^{d_2}$, we have
    \begin{align*}
        &\left\|U\begin{pmatrix}
            I_r\\&O_{(d_1-r)\times (d_1-r)}
        \end{pmatrix}U^\top Mv\right\|_F^2 \\
        &= \left(U\begin{pmatrix}
            I_r\\&O_{(d_1-r)\times (d_1-r)}
        \end{pmatrix}U^\top Mv\right)^\top \left(U\begin{pmatrix}
            I_r\\&O_{(d_1-r)\times (d_1-r)}
        \end{pmatrix}U^\top Mv\right) \\
        &= v^\top M^\top U\begin{pmatrix}
            I_r\\&O_{(d_1-r)\times (d_1-r)}
        \end{pmatrix}U^\top U\begin{pmatrix}
            I_r\\&O_{(d_1-r)\times (d_1-r)}
        \end{pmatrix}U^\top Mv \\
        &= v^\top M^\top U\begin{pmatrix}
            I_r\\&O_{(d_1-r)\times (d_1-r)}
        \end{pmatrix}U^\top Mv \\
        &= \sum_{i=1}^r v^\top M^\top u_i u_i^\top Mv \\
        &= \left(\sum_{i=1}^r v^\top M^\top u_i \right)^2 \\
        &\leq \left(\sum_{i=1}^{d_1} v^\top M^\top u_i \right)^2 \\
        &=\|Mv\|_F^2.
    \end{align*}
    Then for all space $S\subset\mathbb{R}^{d_2}$, we have
    \begin{align*}
        \min_{v\in S}\left\|U\begin{pmatrix}
            I_r\\&O_{(d_1-r)\times (d_1-r)}
        \end{pmatrix}U^\top Mv\right\|_F\leq \min_{v\in S}\|Mv\|_F. 
    \end{align*}
    Thus we have
    \begin{align*}
        \sigma_i\left(U\begin{pmatrix}
            I_r\\&O_{(d_1-r)\times (d_1-r)}
        \end{pmatrix}U^\top M\right) &= \max_{\substack{S \subset \mathbb{R}^{d_2} \\ \operatorname{dim}(S) = i}}\min_{v\in S}\left\|U\begin{pmatrix}
            I_r\\&O_{(d_1-r)\times (d_1-r)}
        \end{pmatrix}U^\top Mv\right\|_F\\
        &\leq \max_{\substack{S \subset \mathbb{R}^{d_2} \\ \operatorname{dim}(S) = i}}\min_{v\in S}\left\|Mv\right\|_F\\
        &=\sigma_i(M).
    \end{align*}
\end{proof}
\begin{lemma}[Gronwall's Theorem~\cite{howard1998gronwall}]
    \label{gronwall}
    Consider $x_t\geq 0$ and inequality
    \begin{align*}
        dx_t\leq [ax_t+f(t)]dt,~~~~~x_0=0. 
    \end{align*}
    We have
    \begin{align*}
        x_t\leq e^{at}\int_0^tf(s)e^{-as}ds.
    \end{align*}
\end{lemma}
\begin{proof}
    Consider $x_te^{-at}\geq 0$ and
    \begin{align*}
        d\left[x_te^{-at}\right]\leq e^{-at}[ax_t+f(t)]dt-ax_te^{-at}dt=f(t)e^{-at}dt,
    \end{align*}
    thus we get
    \begin{align*}
        x_te^{-at}&\leq x_0+\int_0^tf(s)e^{-as}ds,\\
        x_t&\leq e^{at}\int_0^tf(s)e^{-as}ds.
    \end{align*}
\end{proof}

\subsection{Proof for Theorem~\ref{ms-loss}}\label{ms-loss-proof}
In Theorem~\ref{ms-loss}, the objective is $\mathcal{L}_t$, which refers to $\frac{1}{2}\left\|\frac{\alpha}{r}B_t A_t^\top-M\right\|^2$. Equivalently, it has the following restatement version. 
\begin{theorem}[Restatement of theorem~\ref{ms-loss}]
    Consider Asymmetric LoRA under objective~\ref{matrix-factorization} in gradient flow~\ref{abflow} with the frozen adapter from Gaussian initialization or orthogonal initialization. For LSI and RSI, we have for all $t>0$:
    \begin{align*}
        \mathbb{E}_{LSI}\left[\|X_t-M\|_F^2\right]\geq \frac{b-r}{b}\sum_{i=1}^{\min\{a,b\}}\sigma_i(M)^2,~~~~
        \mathbb{E}_{RSI}\left[\|X_t-M\|_F^2\right]\geq \frac{a-r}{a}\sum_{i=1}^{\min\{a,b\}}\sigma_i(M)^2,
    \end{align*}
    where $\mathbb{E}$ represents the expectation with respect to randomness in initialization. The inequality becomes an equality when $t\to\infty$. 
\end{theorem}
\begin{proof}
    For Asymmetric LoRA in LSI with fixed $B_0$, according to Lemma~\ref{asymlora-close-form} we have
    \begin{align*}
        X_t=\left[I-e^{-\eta Z_0t}\right]M=\left[I-e^{-\eta B_0B_0^\top t}\right]M.
    \end{align*}
    For the SVD decomposition $B_0=U_B\Sigma_BV_B^\top$, we have $Z_0=U_B(\Sigma_B)^2U_B^\top$ where $\Sigma_B$ is a diagonal matrix with only first $r$ elements non-zero. Then, consider $t\to\infty$, we have
    \begin{align}
        \lim_{t\to\infty}X_t&=\lim_{t\to\infty}\left[\left[I-e^{-\eta Z_0t}\right]M\right]\\
        &=\left[I-\lim_{t\to\infty}e^{-\eta Z_0t}\right]M\\
        &=U_B\left[I-\lim_{t\to\infty}e^{-\eta \Sigma_B^2t}\right]U_B^\top M=U_B\begin{pmatrix}
            I_r\\&O_{b-r}
        \end{pmatrix}U_B^\top M.
    \end{align}
    According to Lemma~\ref{lossineq}, we have
    \begin{align*}
        \lim_{t\to\infty}\|X_t-M\|_F^2 &= \|\lim_{t\to\infty}X_t-M\|_F^2 = \|M\|_F^2 - \left\|\lim_{t\to\infty}X_t\right\|_F^2\\
        &= \|M\|_F^2 - \left\|U_B\begin{pmatrix}
            I_r\\&O_{b-r}
        \end{pmatrix}U_B^\top M\right\|_F^2 \\
        &= \|M\|_F^2 - \left\|\begin{pmatrix}
            I_r\\&O_{b-r}
        \end{pmatrix}U_B^\top M\right\|_F^2 \\
        &= \|M\|_F^2 - \sum_{i=1}^{r}\left\|u_i(B)^\top M\right\|_F^2.
    \end{align*}
    For finite $t$, we have
    \begin{align*}
        \frac{d\|X_t-M\|_F^2}{dt} &= 2\operatorname{Trace}\left([X_t-M]^\top\frac{d}{dt}[X_t-M]\right) \\
        &= 2\operatorname{Trace}\left([X_t-M]^\top[-\eta Z_0G_t]\right) \\
        &= -2\frac{\eta\alpha^2}{r^2}\operatorname{Trace}\left([X_t-M]^\top B_0B_0^\top [X_t-M]\right)\leq 0, 
    \end{align*}
    indicating for any fixed $B_0$, $\|X_t-M\|_F^2$ is lower bounded by $\lim_{t\to\infty}\|X_t-M\|_F^2$. 

    Gaussian initialization and orthogonal initialization share the same probability for the same $U_B$, thus their properties at converged results are the same. 
    For any orthogonal matrix $U$, denote $U^{(i)}$ as the matrix that moves the right $i$ column of $U$ to left, i.e. $[U_{[:,:i]},U_{[:,i:]}]$ in Python. 
    Due to Gaussian initialization of $B$, the p.d.f at $U_B=U^{(i)}$ is the same as p.d.f at $U_B=U^{(j)}$ for all $i, j$. Thus, we have
    \begin{align*}
        \mathbb{E}_{B_0} \lim_{t\to\infty}\|X_t-M\|_F^2 &=
        \mathbb{E}_{B_0} \left[\|M\|_F^2 - \sum_{i=1}^{r}\left\|u_i(B)^\top M\right\|_F^2\right] ,\\
        &= \|M\|_F^2 - \mathbb{E}_U\left.\frac{1}{b}\sum_{j=1}^{b} \sum_{i=1}^{r}\left\|u_i(B)^\top M\right\|_F^2\right|_{U_B=U^{(j)}},\\
        &= \|M\|_F^2 - \mathbb{E}_U\frac{1}{b}\sum_{i=1}^{r}\sum_{j=1}^{b}\left\|u_i(U^{(j)})^\top M\right\|_F^2,\\
        &= \|M\|_F^2 - \frac{1}{b}\sum_{i=1}^{r}\mathbb{E}_U\left\|U^\top M\right\|_F^2,\\
        &= \|M\|_F^2 - \frac{r}{b}\left\|M\right\|_F^2=\frac{b-r}{b}\sum_{i=1}^{\min\{a,b\}}\sigma_i(M)^2.
    \end{align*}
    For finite $t$, we have
    \begin{align*}
        \mathbb{E}_{B_0} \|X_t-M\|_F^2 \geq \mathbb{E}_{B_0} \lim_{t\to\infty}\|X_t-M\|_F^2 =\frac{b-r}{b}\sum_{i=1}^{\min\{a,b\}}\sigma_i(M)^2.
    \end{align*}

    Asymmetric LoRA in RSI with $X_t$ approximating $M$, is equivalent with Asymmetric LoRA in LSI with $X_t^\top$ approximating $M^\top$. Thus for RSI we have
    \begin{align*}
        \mathbb{E}_{A_0} \|X_t-M\|_F^2 = \mathbb{E}_{B_0} \|X_t^\top-M^\top\|_F^2  \geq \mathbb{E}_{B_0} \lim_{t\to\infty}\|X_t^\top-M^\top\|_F^2 =\frac{a-r}{a}\sum_{i=1}^{\min\{a,b\}}\sigma_i(M)^2.
    \end{align*}
\end{proof}

\subsection{Proof for Theorem~\ref{lora-bad}}\label{lora-bad-proof}
In Theorem~\ref{lora-bad}, the assumption $A_0^\top v_i(M)=O_a$ means $Y_0 v_i(M)=O_a$ while $B_0^\top u_i(M)=O_b$ means $Z_0 u_i(M)=O_b$. In conclusion, $\frac{\alpha}{r}B_tA_t^\top$ can be changed to $X_t$. Equivalently, it has the following restatement version. 
\begin{theorem}[Restatement of Theorem~\ref{lora-bad}]
    For objective~\ref{matrix-factorization} in gradient flow~\ref{xflow}, if there exists $i\leq r$ making the initial $A_0$ and $B_0$ satisfy $Y_0 v_i(M)=O_a$ and $Z_0 u_i(M)=O_b$, we have for any $t$:
    \begin{align*}
        X_t v_i=O_b,~~~\text{and}~~~X_t^\top u_i =O_a,
    \end{align*}
    resulting in for any $t$:
    \begin{align*}
        \mathcal{L}_t-\mathcal{L}^*\geq \frac{1}{2}[\sigma_i(M)^2-\sigma_{r+1}(M)^2]>0,
    \end{align*}
    where $\sigma_i(M)$ is the $i$-th large singular value of $M$ and $u_i(M),v_i(M)$ are the coresponding left and right singular vector. 
\end{theorem}
\begin{proof}
    Due to the semi-positive property of $Y_0$ and $Z_0$, we have for $t=0$:
    \begin{align}\label{lorabad-base}
        X_tv_i=O_b,~~~~Y_tv_i=O_a,~~~~X_t^\top u_i=O_a,~~~~Z_t^\top u_i=O_b. 
    \end{align}
    We then prove~\ref{lorabad-base} true for all $t>0$. According to induction, it is sufficient to prove each gradient in~\ref{lorabad-base} to be zero for all $t$ that satisfies equations~\ref{lorabad-base}. In fact, we have
    \begin{align*}
        \frac{d(X_tv_i)}{dt}&=-\frac{\eta\alpha}{r} Z_t(X_t-M)v_i-\frac{\eta\alpha}{r} (X_t-M)Y_tv_i= \frac{\eta\alpha}{r} Z_tMv_i=\frac{\eta\alpha}{r}\sigma_i(M) Z_t^\top u_i=O_b,\\
        \frac{d(Y_tv_i)}{dt}&=-\frac{\eta\alpha}{r} X_t^\top (X_t-M)v_i-\frac{\eta\alpha}{r} (X_t-M)^\top X_tv_i=\frac{\eta\alpha}{r} X_t^\top Mv_i= \frac{\eta\alpha}{r}\sigma_i(M) X_t^\top u_i=O_a,\\
        \frac{d(X_t^\top u_i)}{dt}&=-\frac{\eta\alpha}{r} \left[Z_t(X_t-M)\right]^\top u_i-\frac{\eta\alpha}{r} \left[(X_t-M)Y_t\right]^\top u_i=\frac{\eta\alpha}{r} Y_tM^\top u_i= \frac{\eta\alpha}{r}\sigma_i(M) Y_tv_i=O_a,\\
        \frac{d(Z_tu_i)}{dt}&=-\frac{\eta\alpha}{r} X_t(X_t-M)^\top u_i -\frac{\eta\alpha}{r} (X_t-M)X_t^\top u_i=\frac{\eta\alpha}{r} X_tM^\top u_i=\frac{\eta\alpha}{r}\sigma_i(M) X_t v_i=O_b.
    \end{align*}
    This means that the composition $\sigma_i(M)u_i(M)v_i(M)^\top$ will not emerge from $X_t$ as $t$ grows up. However, the best low-rank approximation of $M$ requires $\sigma_i(M)u_iv_i^\top$ to be included, which makes the optimization never achieve the best low-rank result. For loss, consider $M'=M-\sigma_i(M)u_iv_i^\top$, we have
    \begin{align*}
        \mathcal{L}_t&=\frac{1}{2}\|X_t-M\|_F^2\\
        &=\frac{1}{2}\operatorname{Trace}((X_t-M'-\sigma_i(M)u_iv_i^\top)(X_t-M'-\sigma_i(M)u_iv_i^\top)^\top)\\
        &=\frac{1}{2}\|X_t-M'\|_F^2+\frac{1}{2}\|\sigma_i(M)u_iv_i^\top\|_F^2\\
        &\geq \frac{1}{2}\sum_{j=r+2}^{\min\{a,b\}}\sigma_j(M)^2+\frac{1}{2}\sigma_i(M)^2\\
        &=\mathcal{L}^*+\frac{1}{2}\left[\sigma_i(M)^2-\sigma_{r+1}(M)^2\right]
    \end{align*}
\end{proof}

\subsection{Proof for Theorem~\ref{lora-similar}}\label{lora-similar-proof}
In the proof of Theorem~\ref{lora-similar}, we denote all ``tilde'' variables as those from Asymmetric LoRA, while the without ``tilde'' as those from classic LoRA. Same problem means a same $W\to\nabla_W\mathcal{L}$ mapping. 
By replacing $\frac{\alpha}{r}B_tA_t^\top$ with $X_t$ and its ``tilde'' version, Theorem~\ref{lora-similar} has the following restatement version. 
\begin{theorem}[Restatement of theorem~\ref{lora-similar}]
    Consider the gradient flow of classic LoRA $X_t$ and the gradient flow of Asymmetric LoRA $\tilde{X_t}$ in the same zero+random initialization. Assume $G_t$ is bounded by $R$ and is Lipschitz in the Frobenius norm, then the difference between Asymmetric LoRA and classic LoRA is upper bounded by
    \begin{align*}
        \left\|X_t - \tilde{X_t}\right\|_F=O\left(\eta^4R^3rt^4\right),
    \end{align*}
    for small $t$ when they are under a same gradient calculator. 
\end{theorem}
\begin{proof}
    For classic LoRA in gradient flow, we have
    \begin{align*}
        \|X_t\|_F^2 &= \|\frac{\alpha}{r}B_tA_t^\top \|_F^2 \\
        &\leq \frac{\alpha^2}{r^2}\|B_t\|_F^2\|A_t\|_F^2 \\
        &= \frac{\alpha^2}{r^2}tr(B_tB_t^\top)tr(A_tA_t^\top) \\
        &= tr(Z_t)tr(Y_t) \\
        &\leq r\|Z_t\|_F\|Y_t\|_F
    \end{align*}
    Consider the flow of $x_t=\|Y_t-Y_0\|_F+\|Z_t-Z_0\|_F$, we have
    \begin{align*}
        \frac{dx_t}{dt} &\leq \left\|\frac{d(Y_t-Y_0)}{dt}\right\|_F + \left\|\frac{d(Z_t-Y_0)}{dt}\right\|_F \\
        &\leq 4\eta \|X_t\|_F\|G_t\|_F \\
        &\leq 4\eta R\sqrt{r\|Z_t\|_F\|Y_t\|_F} \\
        &\leq 4\eta R\sqrt{r}\sqrt{\|Z_t-Z_0\|_F\|Y_t-Y_0\|_F+\|Z_0\|_F\|Y_t-Y_0\|_F+\|Z_t-Z_0\|_F\|Y_0\|_F} \label{nt1}\\
        &\leq 4\eta R\sqrt{r}\left(x_t+2\sqrt{\|Z_0\|_F+\|Y_0\|_F}\sqrt{x_t}\right),
    \end{align*}
    where inequality~\ref{nt1} holds because $\|Z_0\|_F\|Y_0\|_F=0$ in zero+random initialization strategy. Besides, we have $x_t>0$ for $t>0$, which means
    \begin{align*}
        \frac{d\sqrt{x_t}}{dt}\leq 2\eta R\sqrt{r}\left(\sqrt{x_t}+2\sqrt{\|Z_0\|_F+\|Y_0\|_F}\right).
    \end{align*}
    Through Gronwall Theorem, we have
    \begin{align*}
        \sqrt{x_t} &\leq e^{2\eta R\sqrt{r}t}\int_0^t 4\eta R\sqrt{r}\sqrt{\|Z_0\|_F+\|Y_0\|_F}e^{-2\eta R\sqrt{r}s}ds\\
        &=2\sqrt{\|Z_0\|_F+\|Y_0\|_F}e^{2\eta R\sqrt{r}t}\left(1-e^{-2\eta R\sqrt{r}t}\right)\\
        \|Y_t-Y_0\|_F+\|Z_t-Z_0\|_F &\leq 2\left(\|Z_0\|_F+\|Y_0\|_F\right)\left(e^{2\eta R\sqrt{r}t}-1\right)^2.
    \end{align*}
    Considering the difference to gradient flow of Asymmetric LoRA $\tilde{X_t}$, we have
    \begin{align*}
        \frac{d\left\|X_t-\tilde{X_t}\right\|_F}{dt}&\leq \left\|\frac{d[X_t-\tilde{X_t}]}{dt}\right\|_F\\
        &\leq \left\|-\eta(Z_t-Z_0)G_t-\eta Z_0(G_t-\tilde{G_t})-\eta G_t(Y_t-Y_0)-\eta(G_t-\tilde{G_t})Y_0\right\|_F\\
        &\leq \eta\left[\|G_t\|_F(\|Y_t-Y_0\|_F+\|Z_t-Z_0\|_F) + \left(\|Z_0\|_F+\|Y_0\|_F\right)\|G_t-\tilde{G_t}\|_F\right]\\
        &\leq \eta\left[2R\left(\|Z_0\|_F+\|Y_0\|_F\right)\left(e^{2\eta R\sqrt{r}t}-1\right)^2 + L\left(\|Z_0\|_F+\|Y_0\|_F\right)\|X_t-\tilde{X_t}\|_F\right].
    \end{align*}
    Through Gronwall Theorem, we have
    \begin{align*}
        \left\|X_t-\tilde{X_t}\right\|_F &\leq \int_0^t 2\eta R e^{\eta L\left(\|Z_0\|_F+\|Y_0\|_F\right)(t-s)} \left(\|Z_0\|_F+\|Y_0\|_F\right) \left(e^{2\eta R\sqrt{r}s}-1\right)^2 ds \\
        &\leq 2\eta Rt e^{\eta L\left(\|Z_0\|_F+\|Y_0\|_F\right)t} \left(\|Z_0\|_F+\|Y_0\|_F\right) \left(e^{2\eta R\sqrt{r}t}-1\right)^2\\
        &= O\left(\eta^4R^3rt^4\right)
    \end{align*}

    % \begin{align*}
    %     dB_t^\top B_t &= -\eta B_t^\top G_tA_t - \eta A_tG_t^\top B_t = dA_t^\top A_t \\
    %     \|Y_0\|_F^2 = \|B_t^\top B_t - A_t^\top A_t\|_F^2 &= tr(B_t^\top B_tB_t^\top B_t + A_t^\top A_tA_t^\top A_t - 2B_t^\top B_tA_t^\top A_t) \\
    %     &= \|Y_t\|_F^2+\|Z_t\|_F^2-2\|X_t\|_F^2 \\
    %     &\geq \|Y_t\|_F^2-2\|X_t\|_F^2 \\
    %     &= \|Y_t-Y_0+Y_0\|_F^2-2\|X_t\|_F^2 \\
    %     &\geq \|Y_0\|_F^2 + 2tr(Y_0^\top (Y_t-Y_0)) - 2\|X_t\|_F^2 \\
    %     R &\geq tr(Y_0^\top (Y_t-Y_0))
    % \end{align*}

    % \begin{align*}
    %     dB_t^\top B_t &= -\eta B_t^\top G_tA_t - \eta A_tG_t^\top B_t = dA_t^\top A_t \\
    %     tr(Y_0)^2 &= tr(B_t^\top B_t - A_t^\top A_t)^2 \\
    %     &= tr(B_t^\top B_t)^2 + tr(A_t^\top A_t)^2 - 2tr(B_t^\top B_t)tr(A_t^\top A_t) \\
    %     &= tr(Y_t)^2+tr(Z_t)^2-2tr(Y_t)tr(Z_t) \\
    %     &= tr(Y_0)^2+tr(Y_t-Y_0)^2+2tr(Y_0)tr(Y_t-Y_0)-2tr(Y_t)tr(Z_t) \\
    %     &= tr(Y_0)^2-tr(Z_t)^2+2tr(Y_0)tr(Z_t)-2tr(Y_0)tr(Z_t) 
    % \end{align*}

\end{proof}

\subsection{Proof for Theorem~\ref{asym-wise}}\label{asym-wise-proof}
In the Theorem~\ref{asym-wise}, by replacing $A_0$ initialization to $Y_0$ initialization and $B_0$ initialization to $Z_0$ initialization, it has the following restatement version. 
\begin{theorem}[Restatement of theorem~\ref{asym-wise}]\label{asym-wise-restate}
    Consider Asymmetric LoRA under objective~\ref{matrix-factorization} in gradient flow~\ref{xflow}. For RSI with $Y_0=\frac{\alpha}{r}V_M\operatorname{diag}\left(I_r,O_{(a-r)\times (a-r)}\right)V_M^\top, Z_0=O_{b\times b}$ or LSI with $Y_0=O_{a\times a},Z_0=\frac{\alpha}{r}U_M\operatorname{diag}\left(I_r,O_{(b-r)\times (b-r)}\right)U_M^\top$, we have
    \begin{align*}
        \mathcal{L}_t-\mathcal{L}^*=O\left(\exp\left\{-\frac{2\alpha\eta}{r} t\right\}\right).
    \end{align*}
\end{theorem}
\begin{proof}
    According to Lemma~\ref{asymlora-close-form}, with LSI initialization, we have 
    \begin{align*}
        X_t=M[I-e^{-\eta Z_0 t}]=(1-e^{-\frac{\alpha\eta}{r} t})MV_M\begin{pmatrix}
            I_r\\&O_{(a-r)\times (a-r)}
        \end{pmatrix}V_M^\top,
    \end{align*}
    which has loss
    \begin{align*}
        \mathcal{L}(X_t)&=\frac{1}{2}\|X_t-M\|_F^2\\
        &=\frac{1}{2}\left\|M-(1-e^{-\frac{\alpha\eta}{r} t})MV_M\begin{pmatrix}
            I_r\\&O_{(a-r)\times (a-r)}
        \end{pmatrix}V_M^\top\right\|_F^2\\
        &=\frac{1}{2}\left\|U_M\Sigma_MV_M^\top -(1-e^{-\frac{\alpha\eta}{r} t})U_M\Sigma_M\begin{pmatrix}
            I_r\\&O_{(a-r)\times (a-r)}
        \end{pmatrix}V_M^\top \right\|_F^2\\
        &=\frac{1}{2}\left\|\Sigma_M-(1-e^{-\frac{\alpha\eta}{r} t})\Sigma_M\begin{pmatrix}
            I_r\\&O_{(a-r)\times (a-r)}
        \end{pmatrix}\right\|_F^2\\
        &=\frac{1}{2}\left\|\Sigma_M\begin{pmatrix}
            e^{-\frac{\alpha\eta}{r} t}I_r\\&I_{a-r}
        \end{pmatrix}\right\|_F^2\\
        &=\frac{1}{2}e^{-\frac{2\alpha\eta}{r} t}\sum_{i=1}^r\sigma_i(M)^2+\frac{1}{2}\sum_{i=r+1}^{\min\{a,b\}}\sigma_i(M)^2\\
        &=O\left(\exp\left\{-\frac{2\alpha\eta}{r} t\right\}\right)+\mathcal{L}^*.
    \end{align*}
    Asymmetric LoRA in RSI with $X_t$ approximating $M$, is equivalent with Asymmetric LoRA in LSI with $X_t^\top$ approximating $M^\top$. Thus for RSI we also have
    \begin{align*}
        \mathcal{L}_t-\mathcal{L}^*=O\left(\exp\left\{-\frac{2\alpha\eta}{r} t\right\}\right).
    \end{align*}
\end{proof}

\subsection{Proof for Theorem~\ref{lora-wise}}\label{lora-wise-proof}
In the Theorem~\ref{lora-wise}, by replacing $A_0$ initialization to $Y_0$ initialization and $B_0$ initialization to $Z_0$ initialization, it has the following restatement version. 
\begin{theorem}[Restatement of theorem~\ref{lora-wise}]
    Consider classic LoRA under objective~\ref{matrix-factorization} in gradient flow~\ref{xflow}. For initialization same as Theorem \ref{asym-wise-restate}, we have
    \begin{align*}
        \mathcal{L}_t-\mathcal{L}^*=O\left(\exp\left[\frac{-\alpha\eta t}{r}\left(1+\sqrt{1+\frac{2r^2}{\alpha^2} \sigma_r(M)^2\left[1-e^{-\frac{\alpha\eta t}{2r}}\right]^2 }\right)\right]\right).
    \end{align*}
    where $\sigma_i(M)$ is the $i$-th singular value of $M$ and is defined in Theorem~\ref{lora-bad}.
\end{theorem}
\begin{proof}
    We consider
    \begin{align}
        \hat{X_t}:=U_M^\top X_tV_M,~~~~\hat{Y_t}:=V_M^\top Y_tV_M,~~~~\hat{Z_t}:=U_M^\top Z_tU_M,
    \end{align}
    and first prove 
    \begin{align}
        \forall r, \hat{X_t},\hat{Y_t},\hat{Z_t}~\text{are diagonal matrices with only the first $r$ elements non-zero.}\label{lora-wise-cl}
    \end{align}
    If initialized with RSI, i.e. $Y_0=\frac{\alpha}{r}V_M\operatorname{diag}\left(I_r,O_{(a-r)\times (a-r)}\right)V_M^\top, Z_0=O_{b\times b}$, then we have 
    \begin{align*}
        \hat{X_t}:=U_M^\top X_0V_M=O_{b\times a},~~~~\hat{Y_t}:=V_M^\top Y_0V_M=\frac{\alpha}{r}\begin{pmatrix}
            I_r\\&O_{a-r}
        \end{pmatrix},~~~~\hat{Z_t}:=U_M^\top Z_0U_M=O_{b\times b}.
    \end{align*}
    If initialized with LSI, i.e. $Y_0=O_{a\times a},Z_0=\frac{\alpha}{r}U_M\operatorname{diag}\left(I_r,O_{(b-r)\times (b-r)}\right)U_M^\top$, then we have 
    \begin{align*}
        \hat{X_t}:=U_M^\top X_0V_M=O_{b\times a},~~~~\hat{Y_t}:=V_M^\top Y_0V_M=O_{a\times a},~~~~\hat{Z_t}:=U_M^\top Z_0U_M=\frac{\alpha}{r}\begin{pmatrix}
            I_r\\&O_{b-r}
        \end{pmatrix}.
    \end{align*}
    Thus, claim~\ref{lora-wise-cl} is true for $t=0$. If it is true for $t$, then their gradient satisfies
    \begin{align*}
        \dot{\hat{X_t}}&=U_M^\top\left[-\eta Z_t(X_t-M)-\eta (X_t-M)Y_t\right]V_M=-\eta \hat{Z_t}(\hat{X_t}-\Sigma_M)-\eta (\hat{X_t}-\Sigma_M)\hat{Y_t},\\
        \dot{\hat{Y_t}}&=V_M^\top\left[-\eta X_t^\top (X_t-M)-\eta (X_t-M)^\top X_t\right]V_M= -\eta \hat{X_t}^\top (\hat{X_t}-\Sigma_M)-\eta (\hat{X_t}-\Sigma_M)^\top \hat{X_t},\\
        \dot{\hat{Z_t}}&=U_M^\top\left[-\eta X_t(X_t-M)^\top -\eta (X_t-M)X_t^\top\right]U_M= -\eta \hat{X_t}(\hat{X_t}-\Sigma_M)^\top -\eta (\hat{X_t}-\Sigma_M)\hat{X_t}^\top,
    \end{align*}
    which are all diagonal matrices with only the first $r$ elements non-zero. Thus, claim~\ref{lora-wise-cl} is true for all $t$. We denote the $i$-th diagonal elements for $\hat{X_t},\hat{Y_t},\hat{Z_t}$ as $x_{t,i},y_{t,i},z_{t,i}$. Then for each $i$, we have
    \begin{align*}
        \dot{x_{t,i}}&=-\eta z_{t,i}(x_{t,i}-\sigma_i(M))-\eta (x_{t,i}-\sigma_i(M))y_{t,i}=-\eta(y_{t,i}+z_{t,i})(x_{t,i}-\sigma_i(M)),\\
        \dot{y_{t,i}}=\dot{z_{t,i}}&= -\eta x_{t,i}(x_{t,i}-\sigma_i(M)) -\eta (x_{t,i}-\sigma_i(M))x_{t,i}=-2\eta x_{t,i}(x_{t,i}-\sigma_i(M)),
    \end{align*}
    and they are independent with other $j\neq i$. 
    Consider
    \begin{align*}
        d\left[(y_{t,i}+z_{t,i})^2\right]&=-4\eta(y_{t,i}+z_{t,i})x_{t,i}(x_{t,i}-\sigma_i(M))\\
        &=-4\eta x_{t,i}(y_{t,i}+z_{t,i})(x_{t,i}-\sigma_i(M))=2d\left[x_{t,i}^2\right],
    \end{align*}
    resulting in
    \begin{align*}
        y_{t,i}+z_{t,i}&=\sqrt{(y_{0,i}+z_{0,i})^2+2x_{t,i}^2-2x_{0,i}^2}=\sqrt{\frac{\alpha^2}{r^2}+2x_{t,i}^2}. 
    \end{align*}
    So, the dynamic of $x_{t,i}$ is
    \begin{align*}
        \dot{x_{t,i}}&=-\eta(y_{t,i}+z_{t,i})(x_{t,i}-\sigma_i(M))=-\eta(x_{t,i}-\sigma_i(M))\sqrt{\frac{\alpha^2}{r^2}+2x_{t,i}^2}.
    \end{align*}
    For a fixed $i$, consider another flow $x_t$ with some $s$, which satisfies
    \begin{align*}
        x_0 &= x_{0,i} \\
        \dot{x_t} &= -\eta(x_t-\sigma_i(M))\frac{\alpha}{r},~~~0<t<s, \\
        \dot{x_t} &= -\eta(x_t-\sigma_i(M))\sqrt{\frac{\alpha^2}{r^2}+2x_s},~~~s<t. 
    \end{align*}
    Then we have always $x_{t,i}\geq x_i$ and $(\sigma_i(M)-x_{t,i})^2\leq (\sigma_i(M)-x_t)^2$. 
    We can get $x_s=\sigma_i(M)\left[1-e^{-\frac{\alpha\eta s}{r}}\right]$, and for $t>s$:
    \begin{align*}
        x_t &= \sigma_i(M)\left[1-\exp\left[\frac{-\alpha\eta s}{r}-\eta(t-s)\sqrt{\frac{\alpha^2}{r^2}+2x_s^2}\right]\right].
    \end{align*}
    By setting $t=2s$, we have
    \begin{align*}
        x_{2s,i}\geq x_{2s} &= \sigma_i(M)\left[1-\exp\left[\frac{-\alpha\eta s}{r}\left(1+\sqrt{1+\frac{2r^2}{\alpha^2}x_s^2}\right)\right]\right] \\
        &= \sigma_i(M)\left[1-\exp\left[\frac{-\alpha\eta s}{r}\left(1+\sqrt{1+\frac{2r^2}{\alpha^2} \sigma_i(M)^2\left[1-e^{-\frac{\alpha\eta s}{r}}\right]^2 }\right)\right]\right] \\
        &\geq \sigma_i(M)\left[1-\exp\left[\frac{-\alpha\eta s}{r}\left(1+\sqrt{1+\frac{2r^2}{\alpha^2} \sigma_r(M)^2\left[1-e^{-\frac{\alpha\eta s}{r}}\right]^2 }\right)\right]\right].
    \end{align*}
    So, for $X_t$, we have
    \begin{align*}
        \mathcal{L}_t-\mathcal{L}^*&=\frac{1}{2}\|U_M\hat{X_t}V_M^\top-M\|_F^2-\mathcal{L}^*\\
        &=\|\hat{X_t}-\Sigma_M\|_F^2-\mathcal{L}^*\\
        &=\sum_{i=1}^r(\sigma_i(M)-x_{t,i})^2\\
        &\leq \sum_{i=1}^r \sigma_i(M)^2 \exp\left[\frac{-\alpha\eta t}{2r}\left(1+\sqrt{1+\frac{2r^2}{\alpha^2} \sigma_r(M)^2\left[1-e^{-\frac{\alpha\eta t}{2r}}\right]^2 }\right)\right]^2\\
        &=O\left(\exp\left[\frac{-\alpha\eta t}{r}\left(1+\sqrt{1+\frac{2r^2}{\alpha^2} \sigma_r(M)^2\left[1-e^{-\frac{\alpha\eta t}{2r}}\right]^2 }\right)\right]\right) \\
    \end{align*}
    
\end{proof}

\subsection{Proof for Theorem~\ref{hrp-theory}}\label{hrp-theory-proof}
We prove Theorem~\ref{hrp-theory} in the notation of $X_t,Y_t,Z_t$. First, we restate this theorem and HRP algorithm. 
\begin{theorem}[Restatement of theorem~\ref{hrp-theory}]
    For one LSI Asymmetric LoRA $\hat{X_t}$ with rank $\operatorname{hrp\_rank}\geq r$ and random orthogonal initialization 
    $$\hat{A}_0=O_{a\times r}, \hat{B}_0=\sum_{i=1}^{\operatorname{hrp\_rank}}\hat{u_i}e_{i,\operatorname{hrp\_rank}},$$
    And another RSI Asymmetric LoRA $X_s$ with initialization 
    \begin{align*}
        \hat{A}_0=\sum_{i=1}^{\operatorname{hrp\_rank}}v_i(\hat{X_t})e_{i,\operatorname{hrp\_rank}}^\top, \hat{B}_0=O_{b\times r}.
    \end{align*}
    We have for all $t$, $X_s$ converges to $X_\infty$ with loss in expectation
    \begin{align*}
        \mathbb{E}_{U}\mathcal{L}_{\infty}\leq\sum_{i=r+1}^{\operatorname{hrp\_rank}}\sigma_i(M)^2+\frac{a-\operatorname{hrp\_rank}}{2a}\sum_{i=1}^{\max(a,b)}\sigma_i(M)^2.
    \end{align*}
\end{theorem}
\begin{proof}
    According to Lemma~\ref{asymlora-close-form}, for the preheating orth-init LSI we have
    \begin{align*}
        \hat{X_t}&=\left[I-e^{-\eta \hat{Z_0}t}\right]M\\
        &=(1-e^{-\frac{\alpha\eta t}{r}})U_{\hat{Z_0}}\begin{pmatrix}
            I_{\operatorname{hrp\_rank}}\\&O_{(b-\operatorname{hrp\_rank})\times(b-\operatorname{hrp\_rank})}
        \end{pmatrix}U_{\hat{Z_0}}^\top M\\
        &=(1-e^{-\frac{\alpha\eta t}{r}})\hat{X_\infty}.
    \end{align*}
    For the following HRP derived RSI, we have
    \begin{align*}
        X_t &= M\left[I-e^{-\eta \hat{Y_0}t}\right] \\
        &= (1-e^{-\frac{\alpha\eta t}{r}})MV_{\hat{X_t}}\begin{pmatrix}
            I_r\\&O_{(a-r)\times(a-r)}
        \end{pmatrix}V_{\hat{X_t}}^\top \\
        &= (1-e^{-\frac{\alpha\eta t}{r}})MV_{\hat{X_\infty}}\begin{pmatrix}
            I_r\\&O_{(a-r)\times(a-r)}
        \end{pmatrix}V_{\hat{X_\infty}}^\top \\
        &= \left(1-e^{-\frac{\alpha\eta t}{r}}\right)X_\infty.
    \end{align*}
    According to Lemma~\ref{lossineq}, we have
    \begin{align*}
        \|X_t\|_F^2 &\geq \left\|U_{\hat{Z_0}}\begin{pmatrix}
            I_{\operatorname{hrp\_rank}}\\&O_{(b-\operatorname{hrp\_rank})\times(b-\operatorname{hrp\_rank})}
        \end{pmatrix}U_{\hat{Z_0}}^\top X_t\right\|_F^2\\
        &=\left(1-e^{-\frac{\alpha\eta t}{r}}\right)^2\left\|\hat{X_\infty}V_{\hat{X_\infty}}\begin{pmatrix}
            I_r\\&O_{(a-r)\times(a-r)}
        \end{pmatrix}V_{\hat{X_\infty}}^\top\right\|_F^2\\
        &=\left(1-e^{-\frac{\alpha\eta t}{r}}\right)^2\left(\|\hat{X_\infty}\|_F^2-\sum_{i=r+1}^{\operatorname{hrp\_rank}}\sigma_i(\hat{X_\infty})^2\right).
    \end{align*}
    Thus $\|X_{\infty}-M\|_F^2$ is upper bounded by
    \begin{align*}
        \|X_{\infty}-M\|_F^2\leq \|M\|_F^2-\|X_{\infty}\|_F^2\leq \|M\|_F^2-\|\hat{X_\infty}\|_F^2+\sum_{i=r+1}^{\operatorname{hrp\_rank}}\sigma_i(\hat{X_\infty})^2.
    \end{align*}
    According to Lemma~\ref{sigma-order}, we have
    \begin{align*}
        \sigma_i(\hat{X_\infty}) &\leq \sigma_i(M),
    \end{align*}
    indicating
    \begin{align*}
        \|X_{\infty}-M\|_F^2 &\leq \|M\|_F^2-\|\hat{X_\infty}\|_F^2+\sum_{i=r+1}^{\operatorname{hrp\_rank}}\sigma_i(\hat{X_\infty})^2\\
        &\leq \|M-\hat{X_\infty}\|_F^2+\sum_{i=r+1}^{\operatorname{hrp\_rank}}\sigma_i(M)^2.
    \end{align*}
    By taking expectation to both side, according to Theorem~\ref{ms-loss}, we have
    \begin{align*}
        \mathbb{E}_{U}\mathcal{L}_{\infty}\leq\sum_{i=r+1}^{\operatorname{hrp\_rank}}\sigma_i(M)^2+\frac{a-\operatorname{hrp\_rank}}{2a}\sum_{i=1}^{\max(a,b)}\sigma_i(M)^2.
    \end{align*}
\end{proof}

\section{More Experiment Details}\label{apd-exp-detail}
For the GLUE benchmark using the T5-base model, we use the AdamW optimizer with a linear learning rate scheduler with batch size 32 for all tasks and fine-tuning strategies. For LoRA variants, we inject LoRA to all query and value blocks with $r=4$ and $\alpha=4$. For FPFT, we use a relatively lower learning rate $5\times 10^{-5}$ for all tasks, while for AdaLoRA, we use a relatively higher learning rate $5\times 10^{-3}$. For other LoRA variants, we set the learning rate to the same in each task. Due to the difference across tasks, we set different training epochs and the LoRA learning rate same for all tasks. The detailed information is shown in Table~\ref{apd-nlu-tab}. 

\begin{table*}[htb]
    \centering
    \caption{\label{apd-nlu-tab}Hyperparameter settings for fine-tuning T5-base on GLUE.}
\begin{tabular}{lccccc}
\toprule
 & CoLA & MRPC & QNLI & RTE & STS-B \\
\midrule
Learning rate & $2\times 10^{-3}$ & $2\times 10^{-4}$ & $5\times 10^{-4}$ & $2\times 10^{-3}$ & $2\times 10^{-3}$ \\
Epochs & 10 & 20 & 5 & 20 & 10 \\
\bottomrule
\end{tabular}
\end{table*}

For the math reasoning tasks using a large language model, we use AdamW optimizer with a cosine learning rate scheduler with batch size 16 for all models and fine-tuning strategies. For LoRA variants, we inject LoRA to all linear blocks except the language model head with $r=8$ and $\alpha=8$. For FPFT, we use a relatively lower learning rate $5\times 10^{-5}$ for all models, while for LoRA variants, we use a relatively higher learning rate $4\times 10^{-4}$. During fine-tuning, the base model is loaded with the bfloat16 data type, while LoRA blocks are loaded in the float32 data type. In the inference stage, we set the temperature to 0.8, top-p to 0.95, and let the model output no more than 512 tokens.

\section{Additional Experimental Results}

\subsection{Ablation Studies}\label{ablation}

We conducted ablation studies to evaluate how HRP hyperparameters impact the fine-tuned results. First, we fix $\operatorname{hrp\_step}=100$ and examined the effect of $\operatorname{hrp\_rank}$. As shown in Table~\ref{ablation-rank}, HRP performance is not much sensitive to $\operatorname{hrp\_rank}$. In CoLA, STS-B, the average score, and QNLI for $\operatorname{hrp\_rank}\geq 16$, a higher $\operatorname{hrp\_rank}$ correlates with better performance, indicating a higher $\operatorname{hrp\_rank}$ is required for better initialization. However, in smaller datasets like RTE and MRPC, this trend is not observed, likely because a lower $\operatorname{hrp\_rank}$ is enough for these tasks. 

We also conducted experiments by fixing $\operatorname{hrp\_rank}=128$ to analyze the impact of $\operatorname{hrp\_step}$. As shown in Table~\ref{ablation-step}, where $\operatorname{hrp\_step}=0$ corresponds to random orthogonal initialization, HRP performance is also not sensitive to $\operatorname{hrp\_step}$ when HRP is active. However, whether HRP is active or not has a significant impact. This is as expected, as a higher $\operatorname{hrp\_step}$ is used to mitigate the influence of gradient noise. 

\begin{table*}
    \centering
    \caption{\label{ablation-rank}Ablation studies for $\operatorname{hrp\_rank}$ with $\operatorname{hrp\_step}$ keeping 100. }
\begin{tabular}{lllllll}
\toprule
    $\operatorname{hrp\_rank}$ & CoLA & MRPC & QNLI & RTE & STS-B & Avg. \\\hline
    8 & $54.77_{ \pm 0.04 }$ & $88.81_{ \pm 1.14 }$ & $92.13_{ \pm 0.10 }$ & $73.32_{ \pm 2.66 }$ & $88.35_{ \pm 0.52 }$ & $79.68_{ \pm 0.53 }$ \\
    16 & $55.08_{ \pm 1.68 }$ & $88.73_{ \pm 0.53 }$ & $91.97_{ \pm 0.19 }$ & $74.73_{ \pm 0.88 }$ & $88.68_{ \pm 0.65 }$ & $79.84_{ \pm 0.43 }$ \\
    32 & $55.36_{ \pm 1.82 }$ & $88.56_{ \pm 0.31 }$ & $92.07_{ \pm 0.36 }$ & $74.57_{ \pm 1.45 }$ & $88.84_{ \pm 0.76 }$ & $79.88_{ \pm 0.43 }$ \\
    64 & $56.17_{ \pm 0.43 }$ & $88.48_{ \pm 0.35 }$ & $92.09_{ \pm 0.14 }$ & $74.25_{ \pm 1.45 }$ & $88.87_{ \pm 0.43 }$ & $79.97_{ \pm 0.25 }$ \\
    128 & $56.18_{ \pm 0.09 }$ & {$88.89_{ \pm 0.61 }$} & {$92.23_{ \pm 0.08 }$} & {$74.25_{ \pm 1.90 }$} & {$89.04_{ \pm 0.16 }$} & {$80.12_{ \pm 0.52 }$} \\
\bottomrule
\end{tabular}
\end{table*}

\begin{table*}
    \centering
    \caption{\label{ablation-step}Ablation studies for $\operatorname{hrp\_step}$ with $\operatorname{hrp\_rank}$ keeping 128. }
\begin{tabular}{lllllll}
\toprule
    $\operatorname{hrp\_step}$ & CoLA & MRPC & QNLI & RTE & STS-B & Avg. \\\hline
    0 & $54.84_{ \pm 0.27 }$ & $88.48_{ \pm 0.53 }$ & $91.95_{ \pm 0.16 }$ & $73.89_{ \pm 2.74 }$ & $88.87_{ \pm 0.11 }$ & $79.60_{ \pm 0.57 }$ \\
    50 & $56.32_{ \pm 1.28 }$ & $88.81_{ \pm 0.61 }$ & $92.10_{ \pm 0.28 }$ & $75.09_{ \pm 2.34 }$ & $88.92_{ \pm 0.24 }$ & $80.25_{ \pm 0.63 }$ \\
    100 & $56.18_{ \pm 0.09 }$ & {$88.89_{ \pm 0.61 }$} & {$92.23_{ \pm 0.08 }$} & {$74.25_{ \pm 1.90 }$} & {$89.04_{ \pm 0.16 }$} & {$80.12_{ \pm 0.52 }$} \\
    150 & $56.17_{ \pm 1.38 }$ & $88.64_{ \pm 0.81 }$ & $92.20_{ \pm 0.11 }$ & $76.41_{ \pm 1.51 }$ & $88.98_{ \pm 0.63 }$ & $80.48_{ \pm 0.51 }$ \\
    200 & $57.12_{ \pm 0.65 }$ & $87.83_{ \pm 0.76 }$ & $92.24_{ \pm 0.19 }$ & $74.25_{ \pm 0.95 }$ & $89.22_{ \pm 0.69 }$ & $80.13_{ \pm 0.42 }$ \\
\bottomrule
\end{tabular}
\end{table*}

\subsection{Loss Curves}\label{lc-nlg}
To provide deeper insights into the training dynamics, we present the loss curve for Llama in Figure~\ref{llama-apd}, Qwen in Figure~\ref{qwen-apd}, and Gemma in Figure~\ref{gemma-apd}. The visualization convincingly validates our theoretical analysis, demonstrating that HRP indeed achieves superior converged results initialization. Besides, the loss curves reveal that HRP also accelerates the convergence of LoRA, especially in the beginning stage of fine-tuning. 

\begin{figure}
    \centering
    \includegraphics[height=.3\textheight, bb=0 0 460.8 345.2]{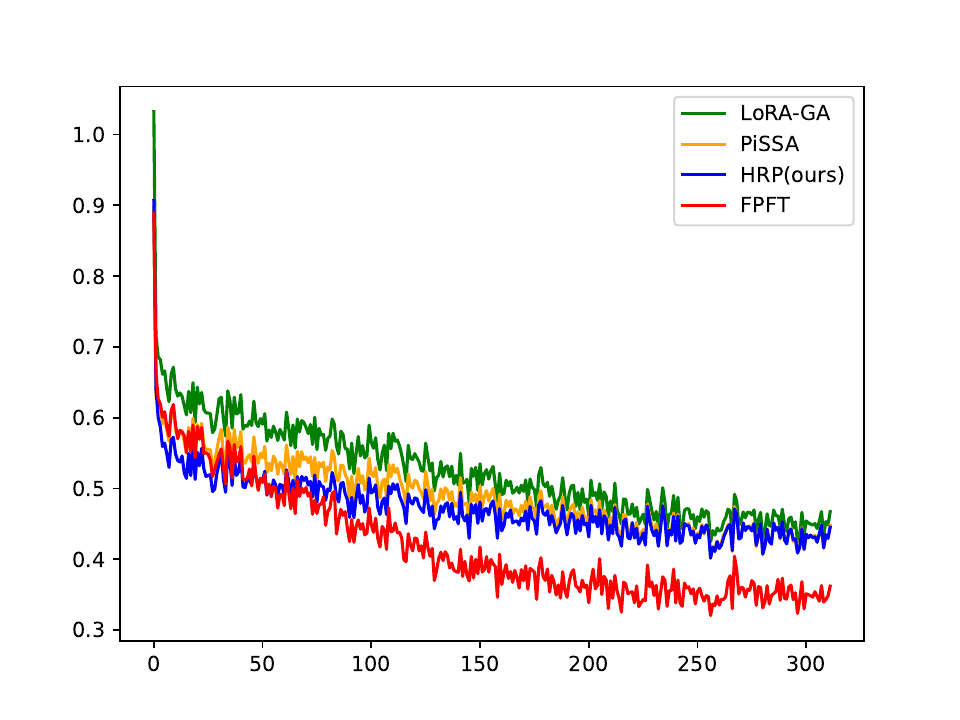}
    \caption{\label{llama-apd}Loss curves for fine-tuning meta-llama/Llama-3.2-1B-Instruct on the MetaMathQA. }
    \includegraphics[height=.3\textheight, bb=0 0 460.8 345.2]{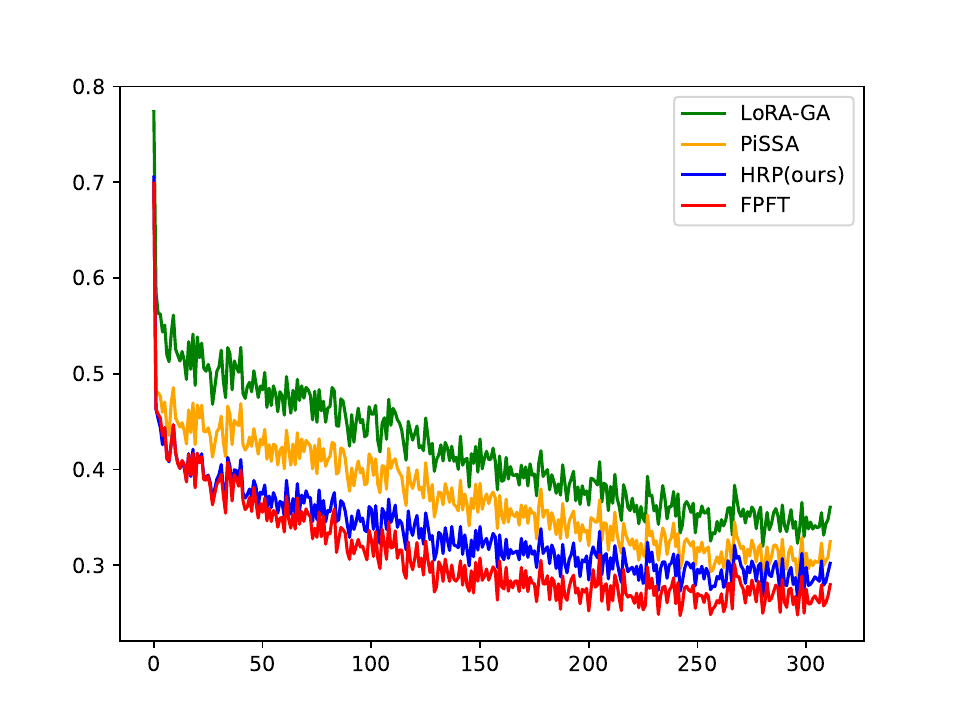}
    \caption{\label{qwen-apd}Loss curves for fine-tuning Qwen/Qwen3-1.7B on the MetaMathQA. }
    \includegraphics[height=.3\textheight, bb=0 0 460.8 345.2]{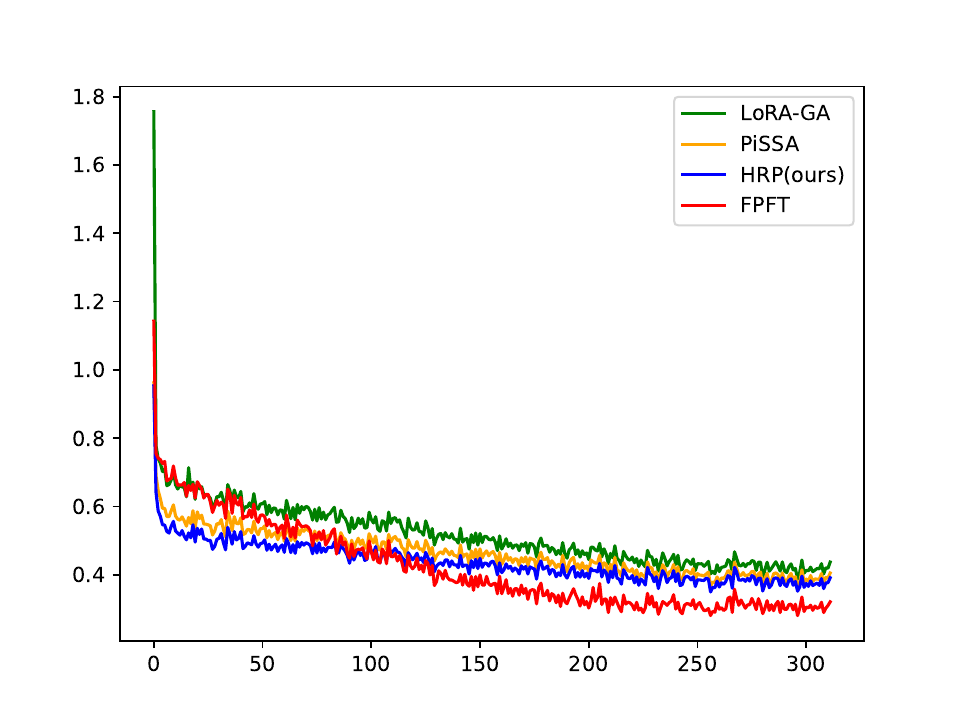}
    \caption{\label{gemma-apd}Loss curves for fine-tuning google/gemma-2-2b-it on the MetaMathQA. }
\end{figure}

% \section{Limitations}\label{check-limiation}

% In this paper, we have demonstrated that HRP makes LoRA achieve better performance than other LoRA variants on the T5-Base (220M), Llama3.2-1B (1.24B), Qwen3-1.7B (2.03B), and Gemma2-2B (2.61B). However, due to computational resource constraints, we have not validated HRP on larger pre-trained models (e.g., Llama 3.3-70B).

% HRP is theoretically designed for better optimization in the training process, which is validated in real-world scenarios through loss curves. However, HRP underperforms some other LoRA variants in some special cases. This limitation suggests a need for considering generalization. 

% Additionally, this paper mainly considers the direction of initialization, while the magnitude should also be important for the convergence rate. We leave it as an interesting future direction. 

% \section{Compute Resources}\label{gpu-info}

% In this paper, all NLU experiments and most NLG experiments were performed on a single NVIDIA H20 and 64-core CPU, except DoRA at Qwen and Gemma on 2 NVIDIA H20 and a 88-core CPU. 

% \section{Broader Impacts}\label{Broader-Impacts}

% This paper presents work whose goal is to advance the initialization of LoRA. A potential impact of this paper is guiding practitioners on more effective initialization for fine-tuning deep learning models using LoRA. After thorough consideration and analysis, it can be firmly stated that this research has no ethical aspects that could raise concerns. 

% \input{sec/checklist}

\end{document}